\documentclass[10pt]{article} 
\usepackage[preprint]{tmlr}

\usepackage{amsmath,amsfonts,bm}

\def\eqref#1{equation~\ref{#1}}

\def\1{\bm{1}}

\DeclareMathAlphabet{\mathsfit}{\encodingdefault}{\sfdefault}{m}{sl}
\SetMathAlphabet{\mathsfit}{bold}{\encodingdefault}{\sfdefault}{bx}{n}

\DeclareMathOperator*{\argmax}{arg\,max}

\usepackage{hyperref}
\usepackage{url}

\usepackage{natbib}

\usepackage{algorithm}
\usepackage{algorithmic}

\usepackage{booktabs}
\usepackage{bm}
\usepackage{mathtools}
\usepackage{newtxtext}

\usepackage{amsmath}
\usepackage{dsfont}

\newcommand{\cA}{\mathcal{A}}
\newcommand{\cD}{\mathcal{D}}
\newcommand{\cG}{\mathcal{G}}
\newcommand{\cM}{\mathcal{M}}
\newcommand{\cO}{\mathcal{O}}
\newcommand{\cR}{\mathcal{R}}
\newcommand{\cS}{{\mathcal{S}}}
\newcommand{\cT}{{\mathcal{T}}}
\newcommand{\cZ}{\mathcal{Z}}
\newcommand{\EE}{\mathbb{E}}
\newcommand{\PP}{\mathbb{P}}

\newtheorem{theorem}{Theorem}
\newtheorem{proposition}{Proposition}[section]
\newtheorem{lemma}[theorem]{Lemma}
\newtheorem{corollary}[theorem]{Corollary}

\newtheorem{remark}{Remark}

\newenvironment{proof}{\noindent\textit{Proof.}}{\hfill $\square$}

\newcommand{\eqnref}[1]{Eq.~(\ref{#1})}

\title{Improving Offline Goal-Conditioned Reinforcement Learning via Selective Reward Stimulation}

\author{\name Jing ZHANG \email jzhanggy@connect.ust.hk \\
      \addr The Hong Kong University of Science and Technology}

\begin{document}

\maketitle

\begin{abstract}
Goal-conditioned reinforcement learning aims to learn policies that reach specified goals, but remains challenging in offline settings with sparse rewards and long-horizon dependencies. In such settings, goal-completion information can be temporally distant from the early decisions that enable success, while offline value estimation introduces additional error. We study this issue from a reward-propagation perspective and show, in a stylized delayed-goal setting, how goal-directed value separation can become small relative to local estimation error. Motivated by this analysis, we propose Reward Stimulation Implicit Q-Learning (RSIQL), a simple non-hierarchical method that introduces additional reward signals at progress-making intermediate states in offline trajectories. RSIQL uses an auxiliary goal-conditioned value function to identify intermediate states estimated to make progress toward the goal and applies reward stimulation to provide less-delayed training supervision. Unlike hierarchical methods, RSIQL does not learn a separate high-level subgoal policy. Experiments on D4RL goal-reaching benchmarks and OGBench show that RSIQL improves over goal-conditioned IQL on average and achieves performance competitive with hierarchical offline goal-conditioned methods, while retaining a simple flat policy structure.
\end{abstract}

\section{Introduction}

Goal-conditioned reinforcement learning (GCRL) aims to learn policies that reach specified goals through sequential decision-making~\citep{schaul2015universal,kaelbling1993learning,sutton2018reinforcement}. Many practical problems can be naturally formulated in this way, including robotic manipulation, navigation, autonomous driving, and embodied decision-making~\citep{anderson2018vision, krantz2020beyond, chaplot2020neural}. In these problems, the agent must learn not only how to act in the current state, but also how to adapt its behavior to different desired goals. This makes goal-conditioned reinforcement learning a flexible framework for multi-task and long-horizon control.

A major challenge in goal-conditioned reinforcement learning is delayed goal-completion supervision. In many goal-reaching formulations, non-goal transitions provide little task-specific information, while the informative improvement in reward occurs only when the goal is reached. Under the $0/-1$ convention used in our analysis, for example, all non-goal transitions receive the same step cost and goal completion is reflected by the change from $-1$ to $0$. Consequently, learning must infer which earlier actions contributed to eventual success from temporally delayed goal-completion information. This difficulty becomes more severe as the horizon increases: decisions made far from the goal receive weak and indirect feedback, even when they are necessary for eventual success.

Offline goal-conditioned reinforcement learning further amplifies this challenge. In the offline setting, the agent must learn entirely from a fixed dataset without additional environment interaction~\citep{kaelbling1993hierarchical, kaelbling1993learning, pong2018temporal, nair2018visual}. This makes the problem attractive for applications where online exploration is unsafe, expensive, or impractical. However, it also introduces distribution shift between the behavior policy that collected the data and the learned policy~\citep{lange2012batch, levine2020offline, fujimoto2019off}. Therefore, value learning must simultaneously cope with sparse long-horizon supervision~\citep{gupta2019relay, park2024hiql, nachum2018data} and the risk of extrapolation error from out-of-distribution actions.

\begin{figure*}[ht!]
\begin{center}
\centerline{\includegraphics[width=0.99\columnwidth]{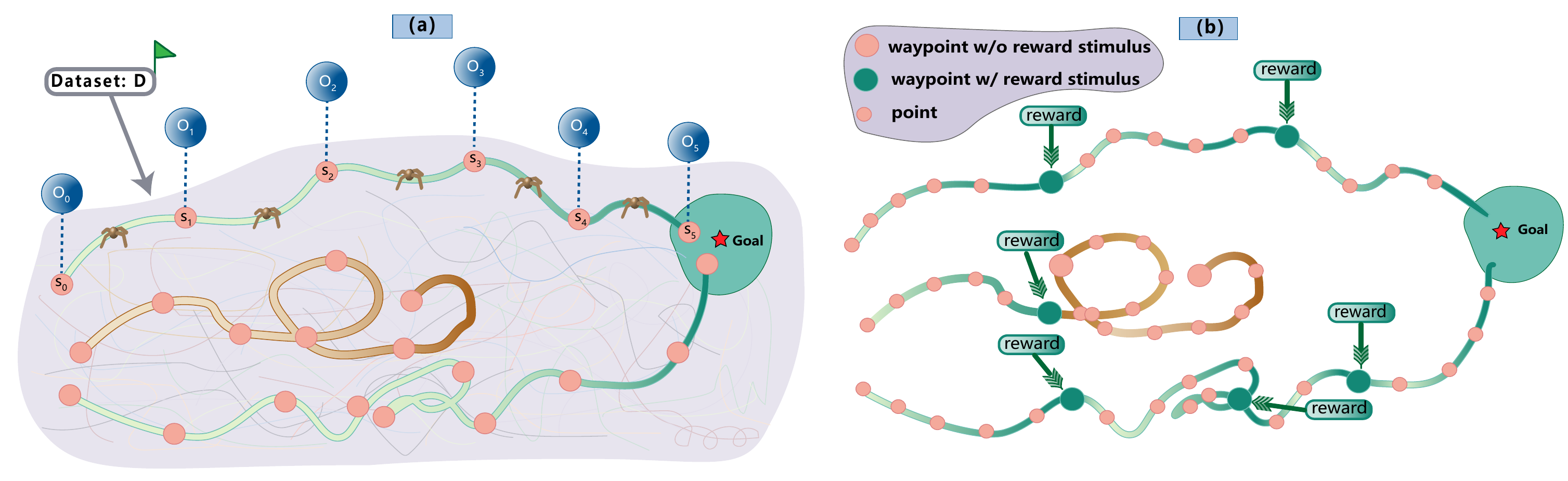}}
\caption{Illustration of delayed goal-completion supervision and RSIQL. (a) Offline trajectories can contain both goal-reaching and suboptimal behavior, while goal-completion information is temporally distant from earlier decisions. (b) RSIQL uses an auxiliary progress criterion to select candidate intermediate states and applies reward stimulation only at transitions associated with estimated progress toward the final goal. Selected transitions therefore provide additional, less-delayed training supervision, whereas candidates that fail the progress criterion receive no stimulation.}
\label{fig-idea}
\end{center}
\end{figure*}

A common strategy for addressing long-horizon goal-conditioned problems is to decompose the final goal into intermediate subgoals. Early goal-conditioned approaches incorporate the goal as a conditioning variable in the value function through the Universal Value Function Approximator (UVFA)~\citep{schaul2015universal}, but this alone does not resolve the difficulty of long-horizon reward propagation. More recent offline GCRL methods often rely on hierarchical reinforcement learning, where a high-level policy predicts intermediate subgoals and a low-level policy aims to reach them~\citep{sutton1999between, nachum2018data, vezhnevets2017feudal, precup2000temporal, park2024hiql}. These methods exploit short-horizon subgoals but still rely on learned values to train or evaluate the high-level policy. More recently, OTA introduces option-aware temporal abstraction directly into value learning, updating the high-level value function over temporally extended options in order to improve long-horizon advantage estimation~\citep{ahn2025ota}. OTA therefore changes the temporal scale of the value update while retaining a hierarchical high-level subgoal policy. Our question is complementary: can the training signal itself be strengthened while retaining both the standard one-step Bellman update and a flat primitive policy? Other approaches use planning or graph-based abstractions to identify intermediate states \citep{choset2005principles, lavalle1998rapidly, kavraki1996probabilistic, srinivas2018universal, qureshi2019motion, paul2019learning, csimcsek2005identifying, mannor2004dynamic}, but constructing and searching such structures can be difficult in continuous high-dimensional spaces~\citep{eysenbach2019search}. 

This motivates a simpler question: \textbf{can intermediate structure improve long-horizon offline goal-conditioned learning without learning an explicit hierarchical policy?} In this paper, we answer this question from the perspective of reward propagation. Under the $0/-1$ goal-reaching convention, we analyze how delayed goal-completion information is propagated backward along offline trajectories and show, in a stylized delayed-goal setting, that useful value separation at early states can become small relative to value-estimation error. This perspective suggests that one way to improve offline goal-conditioned learning is not necessarily to learn a separate subgoal policy, but to strengthen the training signal at informative intermediate states already present in the offline data.

Based on this observation, we propose Reward Stimulation Implicit Q-Learning (RSIQL), a simple non-hierarchical method for offline goal-conditioned reinforcement learning. RSIQL first uses an auxiliary goal-conditioned value function to estimate whether a future state along the same trajectory represents progress toward the final goal. When such a state is identified, RSIQL introduces an additional reward stimulus, thereby providing a less-delayed training signal. The policy and value functions are then trained with an IQL-style offline RL objective using the stimulated rewards.

Unlike hierarchical goal-conditioned methods, RSIQL does not train a high-level policy to predict subgoals at test time. Intermediate states are used only during training to provide additional supervision for value learning. This keeps the learned policy flat and avoids explicit subgoal generation during deployment. At the same time, reward stimulation allows the method to benefit from intermediate trajectory structure in the offline dataset.

We evaluate RSIQL on D4RL and OGBench goal-conditioned benchmarks, including sparse-reward locomotion, manipulation, state-based, and pixel-based tasks. The results show that RSIQL improves over goal-conditioned IQL on average and on many long-horizon sparse-reward tasks, while achieving performance competitive with hierarchical offline goal-conditioned methods. These findings suggest that reward stimulation can provide a simple and effective alternative to explicit subgoal prediction in long-horizon offline goal-conditioned reinforcement learning.

Our main contributions are as follows:

\begin{itemize}
\item We analyze offline goal-conditioned reinforcement learning from the perspective of reward propagation. The analysis identifies horizon-dependent scaling of the goal-reaching message and shows, in a delayed-goal setting, how useful value separation can become small relative to local value-estimation error.

\item We propose Reward Stimulation Implicit Q-Learning (RSIQL), a simple flat offline GCRL algorithm that adds reward stimulation at intermediate states estimated to make progress toward the final goal.

\item RSIQL provides an alternative to explicit hierarchical subgoal prediction. Instead of learning a high-level policy, it uses an auxiliary goal-conditioned value function to identify progress-making states and strengthens the value-learning signal through modified rewards.

\item We evaluate RSIQL on D4RL and OGBench goal-conditioned benchmarks. The results show improvement over goal-conditioned IQL on average and competitive performance relative to hierarchical offline GCRL baselines, especially on several long-horizon sparse-reward tasks.
\end{itemize}

The remainder of the paper is organized as follows. Section \ref{app_relatedwork} reviews related work. Section \ref{sec3} provides background. Section \ref{secPGMs} provides a mechanism-level analysis of delayed goal-completion supervision and local value-estimation error. Section \ref{sec5} presents our method. Section \ref{sec6} reports experimental results on a range of benchmarks. Finally, Sections \ref{sec7} and \ref{sec8} conclude the paper with a summary of contributions, limitations, and broader implications.

\section{Related Work}\label{app_relatedwork}

\subsection{Goal-conditioned reinforcement learning (GCRL)}
Goal-conditioned reinforcement learning studies how to learn policies that achieve specified goals. Compared with standard control tasks, goal-conditioned tasks are often more challenging because rewards are typically sparse and are only observed when the goal is reached \citep{roberts2018special, santucci2016grail, atkeson2015no, kaelbling1993learning}. These challenges become even more pronounced in multi-goal settings, where the policy must generalize across a distribution of goals \citep{kaelbling1993hierarchical, kaelbling1993learning, andrychowicz2017hindsight, pong2018temporal, nair2018visual}.

Prior work addresses these challenges from both the data and algorithmic perspectives. At the data level, a prominent line of work uses goal relabeling to enrich sparse supervision by treating states observed in trajectories as alternative goals~\citep{andrychowicz2017hindsight, fang2019curriculum, bai2019guided, pitis2020maximum, kuang2020goal, li2020generalized, eysenbach2020rewriting, ren2019exploration}. This idea has also been adapted to offline settings, where relabeling is applied directly to static datasets without further environment interaction \citep{yang2023essential, park2024hiql, lynch2020learning, pong2019skew}. Another common approach is reward shaping, which replaces sparse rewards with denser signals based on distances to the goal \citep{trott2019keeping, hartikainen2019dynamical, wu2018laplacian, durugkar2021adversarial, mezghani2023learning} or goal-reaching probabilities \citep{hoang2021successor, machado2020count, ghosh2023reinforcement}. While these methods can substantially improve learning, designing effective shaping signals remains challenging.

At the algorithmic level, a standard formulation is to represent the goal as an additional condition in the value function through the Universal Value Function Approximator (UVFA) \citep{schaul2015universal}. Subsequent work extends this framework by learning richer state-goal representations \citep{ghosh2023reinforcement, park2024hiql, hong2021bi}, latent task spaces \citep{ajay2020opal, lynch2020learning, zeng2024goal}, or model-based planning modules \citep{nair2020goal, charlesworth2020plangan, hafner2019learning, zhang2019solar}. However, model-based approaches remain sensitive to dynamics-modeling errors, which can accumulate over long horizons \citep{finn2017deep, kaiser2019model, babaeizadeh2017stochastic, nagabandi2018neural}.

Beyond value-based GCRL, supervised sequence-modeling approaches have also been explored for goal-reaching tasks \citep{emmons2021rvs, ghosh2019learning, chen2021decision, ding2019goal, yang2022rethinking}. These methods learn to imitate successful trajectory patterns directly, for example through maximum-likelihood objectives \citep{lynch2020learning, ajay2020opal, zeng2024goal}, Transformer-based sequence models \citep{janner2021offline, badrinath2024waypoint}, or latent trajectory models such as VQ-VAE \citep{jiang2022efficient}. Such approaches are often effective when expert or high-quality demonstrations are available.

\subsection{Subgoal discovery and decomposition for long-horizon tasks}
Long-horizon goal-conditioned tasks are particularly difficult because the final reward may be temporally distant from the actions that enable success \citep{nachum2018data, levy2017learning, levy2018hierarchical, park2024hiql, gupta2019relay}. A common strategy is therefore to decompose a distant goal into intermediate subgoals that are easier to reach.

One line of work identifies subgoals through planning or graph-based abstractions. These methods compute paths or landmarks by constructing graphs over the state space~\citep{eysenbach2019search, hoang2021successor, chaplot2020neural, savinov2018semi, zhang2021world}, discretizing the space \citep{paul2019learning, csimcsek2005identifying, mannor2004dynamic}, or performing explicit path planning \citep{choset2005principles, lavalle1998rapidly, kavraki1996probabilistic, srinivas2018universal, qureshi2019motion}. While effective in some domains, these approaches can be difficult to scale to continuous, high-dimensional problems, where graph construction, state partitioning, and distance estimation are all nontrivial. In addition, identifying optimal paths may require stronger structural assumptions or higher-quality data than are typically available in offline datasets.

A second line of work relies on hierarchical reinforcement learning, where a high-level policy predicts intermediate subgoals and a low-level policy attempts to reach them~\citep{sutton1999between, nachum2018data, vezhnevets2017feudal, precup2000temporal}. In offline GCRL, representative examples include methods that infer subgoals using one or more value functions and then optimize lower-level goal-conditioned policies accordingly~\citep{park2024hiql, chane2021goal, levy2017learning, nachum2018data, kulkarni2016hierarchical, gupta2019relay, nair2019hierarchical}. These methods can substantially improve long-horizon performance, but their success depends on accurately identifying useful subgoals, which in turn relies on reliable long-horizon value estimates. In noisy offline settings, subgoal quality may therefore be limited by the same value-estimation errors that make long-horizon GCRL difficult in the first place. Some approaches also parameterize subgoals directly in the state space \citep{park2024hiql, xu2022policy}, which may introduce unnecessary constraints when the goal space is lower-dimensional than the full state space.

A closely related temporal-abstraction approach is Option-aware Temporally Abstracted value learning (OTA)~\citep{ahn2025ota}. OTA diagnoses long-horizon failure in hierarchical offline GCRL through inaccurate high-level advantage estimates and introduces option-aware temporal-difference updates so that the high-level value function is learned over temporally extended action sequences. This reduces the effective horizon of high-level value learning and improves the value ordering used for high-level policy extraction. Unlike HIQL itself, whose primary mechanism is hierarchical decomposition, OTA explicitly introduces temporal abstraction into the value update.

A related direction predicts intermediate subgoals directly from data. For example,~\citep{badrinath2024waypoint} learn waypoints using neural sequence models, while other approaches use heuristics such as selecting intermediate states or minimizing goal-conditioned value objectives \citep{lai2020hindsight, cheng2024goal, nasiriany2019planning}. These methods are naturally compatible with offline data, but they still require an explicit mechanism for identifying or predicting intermediate targets.

In parallel, offline RL has developed a large literature on policy regularization and value conservatism to address distribution shift and out-of-distribution action selection~ \citep{wu2019behavior, fujimoto2021minimalist, kumar2019stabilizing, zhang2024constrained, fujimoto2019off, wang2022diffusion, hansen2023idql, kumar2020conservative, bai2022pessimistic, zhang2024q, kostrikov2021offline, chen2024conservative, garg2023extreme, janner2022planning, chen2021decision}. These techniques are complementary to long-horizon goal decomposition, since even when intermediate subgoals are available, offline methods must still control extrapolation error and prevent out-of-distribution policy updates.

\subsection{Positioning our contribution}
RSIQL is most closely related to reward shaping, offline goal-conditioned value learning, and hierarchical subgoal methods. Its main distinction is that it uses intermediate states as training-time reward-stimulation points, not as test-time subgoals.

Compared with goal-conditioned IQL, RSIQL keeps the same flat goal-conditioned policy structure but modifies the reward signal used for value learning. The goal is to provide less-delayed training supervision in long-horizon sparse-reward tasks.

Compared with reward-shaping methods, RSIQL does not assign dense heuristic rewards to all transitions. Instead, it selectively adds reward stimulation at intermediate states that are estimated to make progress toward the final goal.

Compared with hierarchical methods such as HIQL, RSIQL does not learn a separate high-level policy and does not require subgoal prediction during deployment. The learned policy remains a single goal-conditioned policy. Although hierarchical methods can improve long-horizon offline GCRL by decomposing tasks into subgoals, their effectiveness still depends on reliable subgoal evaluation and reachability estimation. RSIQL avoids this test-time dependency by using intermediate states only during training to strengthen the value-learning signal.

RSIQL is also closely related to OTA~\citep{ahn2025ota}, which addresses long-horizon value-estimation difficulty by incorporating option-aware temporal abstraction into the value update. OTA learns the value used by a hierarchical high-level policy over temporally extended options, thereby changing the temporal scale of temporal-difference learning. RSIQL takes a different route: it keeps the standard primitive one-step Bellman update unchanged and instead modifies the training reward selectively at transitions whose future states are estimated to make progress toward the final goal. Thus, OTA introduces temporal abstraction into value learning and retains hierarchical subgoal prediction, whereas RSIQL introduces less-delayed reward supervision on primitive transitions and deploys a single flat goal-conditioned policy.

Compared with waypoint and sequence-modeling methods, RSIQL does not generate future waypoints or model full trajectories. It remains a TD-based offline RL method and uses intermediate trajectory structure only to strengthen value learning.

\section{Preliminaries}\label{sec3}

\subsection{Offline goal-conditioned problem}\label{offlineGCRL}

\noindent We model the offline goal-conditioned task as an augmented Markov decision process (A-MDP),
\[
\mathcal{M}_A=\{\cS,\cA,\PP_s,r,p_0,\gamma,\cG\},
\]
where $\cS$ is the state space, $\cA$ is the action space, and $\cG$ is the goal space. The initial-state distribution conditioned on goal $g\in\cG$ is denoted by $p_0(s_0\mid g)$, and the transition dynamics are given by the kernel $\PP_s(s'\mid s,a,g)$. The reward function is
\[
r:\cS\times\cA\times\cG\mapsto\{0,1\},
\]
and $\gamma\in(0,1)$ is the discount factor. Many goal-reaching benchmarks report success using a $\{0,1\}$ reward or success metric.
For the theoretical analysis and RSIQL reward construction, we use the $0/-1$ goal-reaching step-cost convention, where successful goal completion receives reward $0$ and non-goal transitions receive reward $-1$.

In the offline setting, learning is performed from a static dataset
\[
\cD=\{\tau_i \mid \tau_i=(s_0,a_0,r_0,g,s_1,a_1,r_1,g,\dots)\},
\]
generated by a behavior policy $\pi_\beta$.\footnote{For simplicity, we write each trajectory using a single goal $g$, but the same analysis applies to multi-goal settings by considering datasets of the form $\cD=\{\tau_i \mid \tau_i=(s_0,a_0,r_0,g_0,s_1,a_1,r_1,g_1,\dots)\}$.} The support of $\cD$ determines the effective state, action, and goal spaces available for offline learning. Depending on the task, the goal space $\cG$ may coincide with the state space or may be different from it.

The objective of offline GCRL is to learn a goal-conditioned policy $\pi(a\mid s,g)$ that maximizes long-term discounted return. For a fixed policy $\pi$, the goal-conditioned state-value function is
\begin{align}\label{eq_uvfa}
V^\pi(s_t,g)
=
\EE_{\pi,\PP_s}\!\left[\sum_{k=0}^{\infty}\gamma^k r_{t+k}\mid s_t,g\right],
\end{align}
and the corresponding goal-conditioned action-value function is
\begin{align}\label{eq_UQFA}
Q^\pi(s_t,a_t,g)
=
\EE_{\pi,\PP_s}\!\left[\sum_{k=0}^{\infty}\gamma^k r_{t+k}\mid s_t,a_t,g\right].
\end{align}
These are the goal-conditioned counterparts of the Universal Value Function Approximator (UVFA) and Universal Q-value Function Approximator (UQFA)~\citep{schaul2015universal}. The optimal action-value function $Q^*$ is instead characterized by the Bellman optimality fixed point
\begin{align}\label{eq_BellmanR}
Q^*(s,a,g)=(\cT_g Q^*)(s,a,g),
\end{align}
where the goal-conditioned Bellman optimality operator is
\begin{align}\label{eq_BellmanO}
\cT_gQ(s,a,g):=r(s,a,g)+\gamma \EE_{s'\sim \PP_s(\cdot\mid s,a,g)}\!\left[\max_{a'}Q(s',a',g)\right].
\end{align}
In some tasks, the goal $g$ may be mapped to a latent representation $f(g)$; in that case, the conditional goal variable in the above definitions is replaced by $f(g)$.

\subsection{Goal-conditioned Implicit Q-Learning (GCIQL)}\label{seciql}

\noindent Implicit Q-Learning (IQL) \citep{kostrikov2021offline} is an offline RL algorithm designed to address Q-value overestimation caused by distribution shift. The key idea of IQL is to use expectile regression when solving the Bellman equation, thereby avoiding the instability introduced by directly maximizing over actions. To this end, IQL learns a separate value function $V$ to replace the maximized Q target in \eqnref{eq_BellmanO}.

Specifically, for the value function $V_\psi(s)$ with parameter $\psi$, IQL minimizes the expectile regression loss
\begin{align}\label{eq_iql_v}
L_V(\psi)=\EE_{(s,a)\sim \cD}\!\left[L_2^\tau\!\left(Q_{\hat{\theta}}(s,a)-V_\psi(s)\right)\right].
\end{align}
For the Q-value function $Q_\theta(s,a)$ with parameter $\theta$, IQL replaces the maximized Bellman target with $V_\psi$ and optimizes
\begin{align}\label{eq_iql_q}
L_Q(\theta)=\EE_{(s,a,s')\sim \cD}\!\left[\bigl(r(s,a)+\gamma V_\psi(s')-Q_\theta(s,a)\bigr)^2\right].
\end{align}
Finally, IQL adopts an AWR-style policy update~\citep{peng2019advantage}. For the policy $\pi_\phi(a\mid s)$ with parameter $\phi$, the policy loss is
\begin{align}\label{eq_iql_p}
L_\pi(\phi)=-\EE_{(s,a)\sim \cD}\!\left[\exp\!\bigl(\beta(Q_{\hat{\theta}}(s,a)-V_\psi(s))\bigr)\log \pi_\phi(a\mid s)\right].
\end{align}

In goal-conditioned tasks, the corresponding goal-conditioned value and Q-value functions replace the standard $V$ and $Q$ functions in IQL. Likewise, the standard policy $\pi_\phi(a\mid s)$ is replaced by the goal-conditioned policy $\pi_\phi(a\mid s,g)$, while the overall optimization procedure remains unchanged.

\section{Mechanistic Analysis of Reward Propagation in Offline Goal-Conditioned Reinforcement Learning}
\label{secPGMs}

In this section, we study offline goal-conditioned reinforcement learning through a mechanism-level analysis of reward propagation. Our goal is not to provide a general failure theorem for offline GCRL. Instead, we isolate two effects that are relevant in long-horizon goal-reaching problems: the absolute goal-reaching message induced by the $0/-1$ reward convention contains a horizon-dependent attenuation factor, and value separation that is itself attenuated with temporal distance can become increasingly vulnerable to local estimation error.

The analysis proceeds in three steps. First, we characterize the horizon-dependent scaling of the goal-reaching backward message under the standard control-as-inference transformation. Second, we connect the corresponding optimality-conditioned action posterior to behavior-policy reweighting, clarifying why relative value estimates are relevant for advantage-weighted policy extraction. Third, in a delayed-goal setting where success and failure receive the same non-goal step cost until the terminal decision time, we derive the resulting temporal attenuation of goal-reaching value separation and study how local approximation and Bellman errors can obscure that separation. These results provide a mechanistic motivation for the reward-stimulation method introduced in Section~\ref{sec5}.

\subsection{Horizon-dependent scaling of the goal-reaching message}
\label{sec4.1}

We begin with the probabilistic-inference view of reinforcement learning \citep{levine2018reinforcement}. We introduce a binary optimality variable $\cO_t$, where $\cO_t=1$ indicates that the transition at time $t$ is compatible with goal-reaching optimal behavior. For a goal $g$, define
\begin{align}
p(\cO_t=1 \mid s_t,a_t,g)
=
\exp(r(s_t,a_t,g)).
\label{eq4.11_revised}
\end{align}
For the analysis, we use the $0/-1$ goal-reaching reward convention
\[
r(s_t,a_t,g)=
\begin{cases}
0, & \text{if the goal is reached},\\
-1, & \text{otherwise}.
\end{cases}
\]
Thus, non-goal transitions incur a unit step penalty, while reaching the goal removes this penalty. Under the control-as-inference transformation,
\begin{align}
p(\cO_t=1 \mid s_t,a_t,g)
=
\begin{cases}
1, & \text{if the goal is reached},\\
e^{-1}, & \text{otherwise}.
\end{cases}
\end{align}

The optimality-conditioned trajectory distribution can then be written as
\begin{align}
p\!\left(\tau \mid \prod_{i=1}^{T}\cO_i=1, g\right)
&\propto
p\!\left(\tau,\prod_{i=1}^{T}\cO_i=1 \mid g\right)
\nonumber\\
&=
p(s_1\mid g)
\prod_{i=1}^{T}
p(\cO_i=1\mid s_i,a_i,g)
p(s_{i+1}\mid s_i,a_i,g)
\pi_{\beta}(a_i\mid s_i,g)
\nonumber\\
&=
p(\tau\mid g)
\exp\!\left(\sum_{i=1}^{T} r(s_i,a_i,g)\right).
\label{eq4.12_revised}
\end{align}
Here, $\pi_{\beta}$ denotes the behavior policy that generated the offline dataset. The expression above shows that optimality conditioning reweights behavior trajectories according to their cumulative goal-conditioned reward.

We define the backward message from a state-action pair $(s_t,a_t)$ to the future optimality event as
\begin{align}
\cM(s_t,a_t,g)
=
p\!\left(\prod_{i=t}^{T}\cO_i=1 \mid s_t,a_t,g\right),
\label{eq4.13_revised}
\end{align}
and the corresponding state-level message as
\begin{align}
\cM(s_t,g)
&=
\int_{\cA}
\cM(s_t,a_t,g)\pi_{\beta}(a_t\mid s_t,g)\,da_t.
\label{eq4.14_revised}
\end{align}
To isolate the component associated with eventual goal achievement, define
\[
\cM^g(s_t,a_t,g)
=
p\!\left(s_T=g,\prod_{i=t}^{T}\cO_i=1 \mid s_t,a_t,g\right),
\]
and
\[
\cM^g(s_t,g)
=
\int_{\cA}\cM^g(s_t,a_t,g)\pi_\beta(a_t\mid s_t,g)\,da_t.
\]

The following proposition characterizes the horizon-dependent factor in this goal-reaching message. The proof is provided in Appendix~\ref{app_a}.

\begin{proposition}[Horizon-dependent scaling of the goal-reaching backward message]
\label{prop1_revised}
Consider an offline goal-conditioned trajectory stopped at terminal time $T$ under the $0/-1$ goal-reaching reward convention above, with successful trajectories stopped when the goal is first reached. Let $\PP_{\pi_{\beta}}(s_T=g\mid s_t,a_t,g)$ denote the probability, under the behavior policy and environment dynamics, that the stopped trajectory reaches goal $g$ at time $T$ from $(s_t,a_t)$. Then
\begin{align}
\cM^g(s_t,a_t,g)
=
\PP_{\pi_{\beta}}(s_T=g\mid s_t,a_t,g)\cdot e^{-(T-t)}.
\label{eq_a2_revised}
\end{align}
Similarly,
\begin{align}
\cM^g(s_t,g)
=
\PP_{\pi_{\beta}}(s_T=g\mid s_t,g)\cdot e^{-(T-t)}.
\label{eq_a3_revised}
\end{align}
At the terminal step, $\cM^g(s_T,a_T,g)=1$ when $s_T$ reaches $g$.
\end{proposition}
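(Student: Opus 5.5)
We will prove the claim directly from the definition of $\cM^g$ by writing it as an expectation over stopped trajectories and using the factorization of the optimality-conditioned joint distribution in \eqnref{eq4.12_revised}. The key observation is that, on the event $\{s_T=g\}$ for a trajectory stopped when the goal is first reached, the reward sequence is deterministic. Every transition at times $t,\dots,T-1$ is a non-goal transition and receives $-1$. The terminal step $T$ is the goal-reaching step and receives $0$.

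\textbf{First step.} Using conditional independence of the $\cO_i$ given the trajectory, we write
\begin{align*}
\cM^g(s_t,a_t,g)
=\EE_{\pi_\beta,\PP_s}\!\left[\1\{s_T=g\}\prod_{i=t}^{T} p(\cO_i=1\mid s_i,a_i,g)\,\Big|\, s_t,a_t,g\right]
=\EE\!\left[\1\{s_T=g\}\exp\!\Big(\sum_{i=t}^{T} r(s_i,a_i,g)\Big)\,\Big|\, s_t,a_t,g\right].
\end{align*}
The future actions $a_{t+1},\dots$ are drawn from $\pi_\beta$ and the states from $\PP_s$, exactly as in \eqnref{eq4.12_revised} restricted to times $\ge t$. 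The expectation is over the stopped trajectory, so $T$ may be random. We will treat $T$ either as fixed or condition on it; in both cases the indicator pins down the reward pattern.

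\textbf{Second step.} On $\{s_T=g\}$ with first-hitting stopping, $r(s_i,a_i,g)=-1$ for $t\le i<T$ and $r(s_T,a_T,g)=0$. Hence $\exp(\sum_i r_i)=e^{-(T-t)}$ deterministically on this event. Pulling this factor out gives $\PP_{\pi_\beta}(s_T=g\mid s_t,a_t,g)\,e^{-(T-t)}$, which is \eqnref{eq_a2_revised}.

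\textbf{Third step.} For \eqnref{eq_a3_revised}, integrate against $\pi_\beta(a_t\mid s_t,g)$ using \eqnref{eq4.14_revised}'s analogue for $\cM^g$. The factor $e^{-(T-t)}$ does not depend on $a_t$. The integral of $\PP_{\pi_\beta}(s_T=g\mid s_t,a_t,g)$ over $a_t$ is $\PP_{\pi_\beta}(s_T=g\mid s_t,g)$ by the law of total probability.

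\textbf{Terminal case.} When $t=T$ and $s_T$ reaches $g$, the product contains the single factor $\exp(0)=1$ and the indicator equals $1$, so $\cM^g(s_T,a_T,g)=1$.

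\textbf{Main obstacle.} The delicate point is making precise the reward bookkeeping and what $T$ denotes. We must check that "reaches the goal at time $T$" together with first-hitting stopping really forces exactly $T-t$ penalized steps. Specifically, the goal-reaching transition must be indexed at $T$ rather than $T-1$, matching $r=0$ "if the goal is reached" at the step where $s_T=g$. If $T$ is random, the identity should be read pathwise, or conditionally on the stopping time $T$. In that case the clean product form holds with $\PP_{\pi_\beta}(s_T=g\mid\cdot)$ interpreted as the probability of first hitting $g$ at that time. We will state this convention explicitly at the start of the proof.
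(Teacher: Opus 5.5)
Your proof is correct, but it is organized differently from the paper's.

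\textbf{The paper's route.} The paper never writes $\cM^g$ as a single trajectory expectation. It first derives, from the Markov property, the one-step recursion $\cM^g(s_t,a_t,g)=p(\cO_t=1\mid s_t,a_t,g)\int_{\cS}\cM^g(s_{t+1},g)\,p(s_{t+1}\mid s_t,a_t,g)\,ds_{t+1}$. It then runs a backward induction from the terminal condition $\cM^g(s_T,a_T,g)=1$. At each non-goal step it pulls out a factor $e^{-1}$ and uses the law of total probability to rebuild $\PP_{\pi_\beta}(s_T=g\mid s_t,a_t,g)$ from $\PP_{\pi_\beta}(s_T=g\mid s_{t+1},g)$. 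The state-level identity is then obtained by integrating against $\pi_\beta$, exactly as in your third step.

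\textbf{Your route.} You unroll that recursion in one shot. On the event of first hitting $g$ at time $T$, the reward sequence is deterministic, so the weight $\exp\bigl(\sum_{i=t}^{T} r(s_i,a_i,g)\bigr)$ is the constant $e^{-(T-t)}$ there and factors out of the expectation.

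\textbf{What each buys.} Your version shows exactly where the hypothesis enters: the weight is constant only on the first-hitting event. That is also where the fixed-versus-random-$T$ subtlety you flag lives. The paper's induction, by contrast, sets $p(\cO_t=1\mid s_t,a_t,g)=e^{-1}$ at every step before $T$ and relies on the same first-hitting convention only implicitly. The paper's version keeps the computation local, one transition at a time, and displays the message as a Bellman-type backup. That is the natural form for the log-message quantities $Q^{\mathrm{msg}}$ and $V^{\mathrm{msg}}$ introduced in the next subsection.
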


Proposition~\ref{prop1_revised} identifies an exponential factor in the \emph{absolute} goal-reaching message induced by the control-as-inference transformation and this reward scale. Importantly, $e^{-(T-t)}$ is shared by actions evaluated at the same time step. The proposition therefore does not, by itself, imply that relative action preferences or advantage differences must vanish with the horizon. Relative action discrimination also depends on the action-dependent reachability term and on how accurately the relevant value quantities are learned from offline data. We make this distinction explicit in the next two subsections.

\subsection{Optimality-conditioned policy extraction}
\label{sec4.3}

We next examine how the backward-message representation induces an action posterior. Since learning is performed from a fixed dataset, this posterior is defined only where the behavior distribution has support. For this subsection, define the analytic log-message quantities
\begin{align}
Q^{\mathrm{msg}}(s_t,a_t,g)=\log \cM(s_t,a_t,g),
\qquad
V^{\mathrm{msg}}(s_t,g)=\log \cM(s_t,g).
\label{eq4.32_revised}
\end{align}
These quantities are induced by the probabilistic-inference construction; they are not the learned IQL critic and value networks. For any $(s_t,g)$ with \(\cM(s_t,g)>0\), the posterior action distribution conditioned on future optimality is
\begin{align}
p\!\left(a_t\mid s_t,g,\prod_{i=t}^{T}\cO_i=1\right)
&=
\frac{
p\!\left(\prod_{i=t}^{T}\cO_i=1\mid s_t,a_t,g\right)
\pi_{\beta}(a_t\mid s_t,g)
}{
p\!\left(\prod_{i=t}^{T}\cO_i=1\mid s_t,g\right)
}
\nonumber\\
&=
\exp\!\left(Q^{\mathrm{msg}}(s_t,a_t,g)-V^{\mathrm{msg}}(s_t,g)\right)
\pi_{\beta}(a_t\mid s_t,g).
\label{eq4.33_revised}
\end{align}
The proof of the following proposition is given in Appendix~\ref{app_optpolicy}.

\begin{proposition}[Optimality-conditioned behavior reweighting]
\label{prop2_revised}
For any state--goal pair $(s,g)$ with \(\cM(s,g)>0\), the optimality-conditioned action posterior over actions in the support of $\pi_\beta(\cdot\mid s,g)$ is
\begin{align}
\pi^{\mathrm{post}}(a\mid s,g)
=
\exp\!\left(Q^{\mathrm{msg}}(s,a,g)-V^{\mathrm{msg}}(s,g)\right)
\pi_{\beta}(a\mid s,g).
\label{eq_policy_reweight}
\end{align}
Equivalently, a parametric policy can be fitted to this posterior through
\begin{align}
\pi^{\mathrm{post}}
\in
\argmax_{\pi}
\;
\EE_{(s,a,g)\sim\cD}
\left[
\exp\!\left(Q^{\mathrm{msg}}(s,a,g)-V^{\mathrm{msg}}(s,g)\right)
\log \pi(a\mid s,g)
\right].
\label{optpolicy_revised}
\end{align}
\end{proposition}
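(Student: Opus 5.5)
The plan is to prove the two claims in turn. The first claim, the posterior formula \eqref{eq_policy_reweight}, follows from Bayes' rule. The second, the variational characterization \eqref{optpolicy_revised}, follows by reducing the weighted log-likelihood objective to a family of per-state cross-entropy problems.

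For the first claim, fix $(s,g)$ with $\cM(s,g)>0$. By Bayes' rule, the posterior over $a$ given the optimality event $\prod_{i=t}^{T}\cO_i=1$ equals the joint $p(\prod_i\cO_i=1\mid s,a,g)\,\pi_\beta(a\mid s,g)$ divided by the marginal $p(\prod_i\cO_i=1\mid s,g)$. This holds because, in the graphical model behind \eqref{eq4.12_revised}, actions are drawn from $\pi_\beta$ before conditioning.

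By definition \eqref{eq4.13_revised}, the numerator likelihood is $\cM(s,a,g)$. By \eqref{eq4.14_revised}, the marginal is $\int \cM(s,a,g)\pi_\beta(a\mid s,g)\,da=\cM(s,g)$, which is positive by assumption. Writing $\cM(s,a,g)/\cM(s,g)=\exp(Q^{\mathrm{msg}}-V^{\mathrm{msg}})$ via \eqref{eq4.32_revised} gives \eqref{eq_policy_reweight} on the support of $\pi_\beta$. Where $\cM(s,a,g)=0$, we use the convention $\log 0=-\infty$ and $\exp(-\infty)=0$, and both sides vanish.

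For the second claim, assume as usual that $\cD$ samples actions from $\pi_\beta(\cdot\mid s,g)$ given $(s,g)$. Conditioning on $(s,g)$, the objective becomes
\[
\EE_{(s,g)\sim\cD}\!\left[\int \pi_\beta(a\mid s,g)\,e^{Q^{\mathrm{msg}}-V^{\mathrm{msg}}}\log\pi(a\mid s,g)\,da\right]
=\EE_{(s,g)}\!\left[\int \pi^{\mathrm{post}}(a\mid s,g)\log\pi(a\mid s,g)\,da\right],
\]
where the equality uses the first claim. For each $(s,g)$ the inner term equals $-H(\pi^{\mathrm{post}})-\mathrm{KL}(\pi^{\mathrm{post}}\,\|\,\pi)$. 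The entropy term does not depend on $\pi$, and the KL term is nonnegative, vanishing iff $\pi=\pi^{\mathrm{post}}$ almost everywhere. Hence, over the unrestricted (or sufficiently expressive) policy class, $\pi^{\mathrm{post}}$ maximizes the objective pointwise and therefore in expectation, which gives the stated membership in the $\argmax$.

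The main obstacle is not algebraic but a matter of stating the assumptions precisely. The weight must be exactly normalized: $\int e^{Q^{\mathrm{msg}}-V^{\mathrm{msg}}}\pi_\beta\,da=1$, which is exactly the first claim. The dataset action distribution must equal $\pi_\beta$. The maximization must be over a class containing $\pi^{\mathrm{post}}$ and must be restricted to $(s,g)$ in the support of $\cD$ with $\cM(s,g)>0$. I would state these explicitly, and remark that for a restricted parametric class the argument instead yields the KL projection of $\pi^{\mathrm{post}}$, which motivates the AWR-style loss \eqref{eq_iql_p}.
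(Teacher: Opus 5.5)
Your proposal is correct and follows essentially the same route as the paper. The posterior comes from Bayes' rule with $\pi_\beta$ as the action prior together with the message definitions, and the weighted log-likelihood is identified with $-D_{\mathrm{KL}}(\pi^{\mathrm{post}}\,\|\,\pi)$ up to a $\pi$-independent term: the paper starts from the expected KL and drops constants, while you start from the objective and complete it to a cross-entropy. Your explicit assumptions (dataset actions drawn from $\pi_\beta(\cdot\mid s,g)$ and a policy class containing $\pi^{\mathrm{post}}$) are exactly what the paper's ``dropping terms independent of $\pi$'' step uses without stating them.
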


The weighting in \eqnref{optpolicy_revised} has the same functional form as advantage-weighted regression \citep{peng2019advantage}. This provides a structural connection to IQL-style policy extraction, which reweights behavior actions using learned relative value estimates while avoiding explicit maximization over out-of-distribution actions. Proposition~\ref{prop2_revised} is not an equivalence between $Q^{\mathrm{msg}},V^{\mathrm{msg}}$ and the learned IQL value functions, nor does $\pi^{\mathrm{post}}$ denote an unrestricted environment-optimal policy. Its role is to clarify why reliable relative value estimates are important for behavior-supported policy reweighting.

\subsection{Delayed-goal value separation and local signal quality}
\label{sec4.2}

Proposition~\ref{prop1_revised} concerns the absolute backward message and does not, by itself, imply that relative action or policy separation deteriorates with horizon. We therefore analyze value separation directly in the delayed-goal setting underlying the reward-propagation argument. In this setting, the compared behaviors receive the same non-goal step reward until the terminal decision time $T$, and differ only in their probability of reaching the goal at $T$.

Let $Q_t^\pi(s,a,g)$ denote the goal-conditioned action-value function at time step $t$ under policy $\pi$. For two continuation policies $\pi^s$ and $\pi^f$, define the effective progress separation
\begin{align}
\cR_t(s,a,g)
=
\left|
Q_t^{\pi^s}(s,a,g)
-
Q_t^{\pi^f}(s,a,g)
\right|.
\label{eq_b2_new_revised}
\end{align}
The following lemma makes explicit when this separation inherits a geometric dependence on the remaining horizon. Its proof is provided in Appendix~\ref{app_b}.

\begin{lemma}[Delayed-goal value separation]
\label{lem:delayed_goal_separation}
Consider the $0/-1$ goal-reaching reward convention and a finite-horizon comparison from time $t$ to terminal time $T$. Suppose that, under both continuation policies $\pi^s$ and $\pi^f$, the trajectory remains outside the goal for times $j=t,\ldots,T-1$, so both receive reward $-1$ over these steps. Let
\begin{align}
p_s &= \PP_{\pi^s}(s_T=g\mid s_t,a_t,g),\\
p_f &= \PP_{\pi^f}(s_T=g\mid s_t,a_t,g)
\end{align}
be their terminal goal-reaching probabilities. Then
\begin{align}
Q_t^{\pi^s}(s_t,a_t,g)-Q_t^{\pi^f}(s_t,a_t,g)
=
\gamma^{T-t}(p_s-p_f),
\label{eq:delayed_goal_gap_exact}
\end{align}
and therefore
\begin{align}
\cR_t(s_t,a_t,g)
=
\gamma^{T-t}|p_s-p_f|
\le
\gamma^{T-t}.
\label{eq_signal_decay_revised}
\end{align}
In particular, if $\pi^s$ reaches the goal at $T$ with probability one and $\pi^f$ fails with probability one, then $\cR_t(s_t,a_t,g)=\gamma^{T-t}$.
\end{lemma}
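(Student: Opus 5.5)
The plan is to write each $Q_t^{\pi}(s_t,a_t,g)$ as a finite discounted sum and subtract the two sums term by term. We use the finite-horizon analogue of \eqnref{eq_UQFA}, stopped at $T$. The trajectory is stopped at $T$ (as in Proposition~\ref{prop1_revised}), so the return from time $t$ is
\[
\sum_{j=t}^{T-1}\gamma^{j-t} r_j + \gamma^{T-t} r_T .
\]
Here $r_T$ is the terminal reward. It equals $0$ if $s_T=g$ and $-1$ otherwise.

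The first step handles the non-goal steps. By hypothesis, under both $\pi^s$ and $\pi^f$ the trajectory stays outside the goal for $j=t,\dots,T-1$. Hence $r_j=-1$ almost surely for these $j$ under either policy. The prefix therefore contributes the same deterministic amount,
\[
-\sum_{j=t}^{T-1}\gamma^{j-t},
\]
to both action-values, and it cancels in the difference. Because this cancellation happens pathwise, the continuation policies may differ arbitrarily in which non-goal states they visit.

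The second step handles the terminal term. Its expectation under policy $\pi$ is
\[
\gamma^{T-t}\,\EE_\pi[r_T] = \gamma^{T-t}\bigl(0\cdot p + (-1)(1-p)\bigr) = -\gamma^{T-t}(1-p),
\]
with $p=p_s$ or $p=p_f$. Subtracting gives
\[
-\gamma^{T-t}(1-p_s)+\gamma^{T-t}(1-p_f) = \gamma^{T-t}(p_s-p_f),
\]
which is \eqref{eq:delayed_goal_gap_exact}.

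Taking absolute values yields the expression for $\cR_t$ in \eqref{eq_signal_decay_revised}. Since $p_s,p_f\in[0,1]$, we have $|p_s-p_f|\le 1$, which gives the bound $\gamma^{T-t}$. The special case $p_s=1$, $p_f=0$ gives equality.

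The only delicate point is the convention for the reward at the terminal step. We must fix that the goal-completion indicator enters at index $T$ with discount $\gamma^{T-t}$, as opposed to being attached to the transition into $s_T$ at index $T-1$. We also need the stopped process to contribute nothing after $T$. I would state this convention explicitly, consistent with Proposition~\ref{prop1_revised}'s stopped trajectories. If the alternative indexing were used, the same argument would go through with exponent $T-t-1$, so the structure of the proof does not depend on this choice.
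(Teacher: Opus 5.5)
Your proposal is correct and follows essentially the same argument as the paper: the common $-1$ prefix over $j=t,\dots,T-1$ cancels, and the terminal expected reward $-1+p$ yields the difference $\gamma^{T-t}(p_s-p_f)$, after which the bound follows from $p_s,p_f\in[0,1]$. Your explicit note on indexing the goal reward at step $T$ (with discount $\gamma^{T-t}$) is a useful clarification of a convention the paper leaves implicit.
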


Lemma~\ref{lem:delayed_goal_separation} is distinct from Proposition~\ref{prop1_revised}. Proposition~\ref{prop1_revised} identifies a common horizon-dependent factor in an absolute control-as-inference message, whereas Lemma~\ref{lem:delayed_goal_separation} directly characterizes the value separation between delayed goal-reaching and failing behavior under the stated reward-timing condition. The lemma does not claim that arbitrary successful and failing policies satisfy this bound; if one behavior can reach the goal substantially earlier than $T$, the return gap need not be bounded by $\gamma^{T-t}$.

We next study how this delayed-goal separation interacts with local value-estimation error. Let $Q_t$ denote the target action-value quantity being approximated locally and let $\hat Q_t$ denote the learned critic. We decompose the local error into
\begin{align}
\epsilon_t(s,a,g)
&=
Q_t(s,a,g)-\hat Q_t(s,a,g),\\
\delta_t(s,a,g)
&=
\hat Q_t(s,a,g)-\cT_g\hat Q_t(s,a,g),
\end{align}
where $\cT_g$ is the goal-conditioned Bellman operator. Under the local residual model used below, define the aggregate error magnitude
\begin{align}
\zeta_t(s,a,g)
=
\left|
\epsilon_t(s,a,g)+|\delta_t(s,a,g)|
\right|,
\end{align}
and the signal-to-noise ratio
\begin{align}
\Delta_t(s,a,g)
=
\frac{\cR_t(s,a,g)}{\zeta_t(s,a,g)}.
\label{eq_b1_revised}
\end{align}
A value $\Delta_t>1$ means that the progress separation exceeds this local error magnitude.

The following proposition combines Lemma~\ref{lem:delayed_goal_separation} with the stated local error model. The proof is provided in Appendix~\ref{app_b}.

\begin{proposition}[Local signal-to-noise bound under delayed goal completion]
\label{prop3_revised}
Under the conditions of Lemma~\ref{lem:delayed_goal_separation}, assume that $\epsilon_t(s,a,g)\sim \cG(0,b)$ is Gumbel-distributed with scale $b>0$, while $\delta_t(s,a,g)$ is treated as deterministic in the local analysis. Then
\begin{align}
\PP\!\left(\Delta_t(s,a,g)>1\right)
\le
\frac{2}{e b}\gamma^{T-t}.
\label{eq_snr_bound_revised}
\end{align}
In particular, when $b\ge 2$,
\begin{align}
\PP\!\left(\Delta_t(s,a,g)>1\right)
\le
\frac{\gamma^{T-t}}{e}.
\end{align}
\end{proposition}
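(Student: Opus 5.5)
The plan is to rewrite the event $\{\Delta_t>1\}$ as the event that a Gumbel variable falls into a short interval, and then bound that probability by the interval length times the maximal Gumbel density. No delicate tail estimate is needed.

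\emph{Reduce to an interval event.} Fix $(s,a,g)$ and write $c:=|\delta_t(s,a,g)|\ge 0$. By the hypothesis of the local analysis, $c$ is a deterministic constant. By Lemma~\ref{lem:delayed_goal_separation}, $\cR_t=\gamma^{T-t}|p_s-p_f|$, which is also deterministic and satisfies $\cR_t\le\gamma^{T-t}$. Since $\zeta_t=|\epsilon_t+c|$, the event $\{\Delta_t>1\}$ is exactly $\{|\epsilon_t+c|<\cR_t\}$. This in turn equals
\[
\{\epsilon_t\in(-c-\cR_t,\,-c+\cR_t)\},
\]
an open interval of length $2\cR_t$. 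If $\cR_t=0$ the event is empty and the bound holds trivially, so assume $\cR_t>0$. The case $\zeta_t=0$ is a null event and can be ignored.

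\emph{Bound the density.} The Gumbel density with location $\mu$ (here $\mu=0$) and scale $b$ is
\[
f(x)=\tfrac{1}{b}\exp\!\bigl(-(z+e^{-z})\bigr),\qquad z=(x-\mu)/b.
\]
The exponent $-(z+e^{-z})$ is maximized at $z=0$, where it equals $-1$. Hence $\sup_x f(x)=1/(eb)$, and this value does not depend on the location.

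\emph{Conclude.} Bounding the probability of the interval by its length times the supremum of the density gives
\[
\PP(\Delta_t>1)\le 2\cR_t\cdot\frac{1}{eb}\le\frac{2}{eb}\gamma^{T-t},
\]
where the last step uses $\cR_t\le\gamma^{T-t}$ from Lemma~\ref{lem:delayed_goal_separation}. For $b\ge 2$ we have $2/b\le 1$, which yields the stated bound $\gamma^{T-t}/e$.

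\emph{Main obstacle.} The only step needing care is the reduction in the first step. It has to be checked that the absolute values in $\zeta_t$ and $\cR_t$ give a single symmetric interval for $\epsilon_t$, and it relies on $\delta_t$ being deterministic so that $\epsilon_t$ remains the only random quantity. Once that is settled, the density bound is elementary.
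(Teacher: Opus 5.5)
Your proof is correct and is essentially the paper's argument: the paper also reduces $\{\Delta_t>1\}$ to $\{|\epsilon_t+|\delta_t||<\cR_t\}$, writes its probability as a difference of two Gumbel CDF values, and bounds it using that $u\mapsto\exp(-\exp(-u))$ is $1/e$-Lipschitz, which is the same as your ``interval length $2\cR_t$ times maximal density $1/(eb)$'' estimate. The closing steps, using $\cR_t\le\gamma^{T-t}$ from Lemma~\ref{lem:delayed_goal_separation} and then the $b\ge 2$ specialization, are also identical.
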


Proposition~\ref{prop3_revised} remains a local, stylized signal-quality result because of the Gumbel error model, but its horizon dependence no longer enters as an assumed value-separation condition. Instead, it follows from the delayed-goal reward structure in Lemma~\ref{lem:delayed_goal_separation}. Thus, within this setting, temporal distance reduces the return separation between goal-reaching and failing behavior, while local estimation error can make the reduced separation harder to resolve.

Taken together, the analysis identifies two complementary effects. First, under the control-as-inference representation, the absolute goal-reaching message carries a horizon-dependent attenuation factor. Second, in the delayed-goal setting where compared behaviors receive identical non-goal rewards until time $T$, the goal-reaching value separation is itself proportional to $\gamma^{T-t}$, and local approximation and Bellman errors can obscure this separation. These results are mechanism-level statements under explicit reward-timing and local-error conditions rather than a general failure theorem for offline GCRL. They motivate introducing less-delayed training supervision at informative intermediate states, which we instantiate next through reward stimulation.

\section{Reward Stimulation via Intermediate Subgoals for Offline Goal-Conditioned Reinforcement Learning}
\label{sec5}

The analysis in Section~\ref{secPGMs} provides a mechanism-level motivation for strengthening intermediate supervision. Under the control-as-inference representation, the absolute goal-reaching message contains a horizon-dependent attenuation factor. In the delayed-goal setting, the value separation between goal-reaching and failing behavior is also proportional to $\gamma^{T-t}$, so local estimation error can increasingly obscure this distinction at earlier states. This motivates adding less-delayed training supervision at informative intermediate states without requiring a separate high-level policy.

Motivated by this observation, we propose \emph{Reward Stimulation Implicit Q-Learning} (RSIQL), a simple non-hierarchical method for offline GCRL. RSIQL uses intermediate states already present in offline trajectories to provide additional reward supervision during training. The key idea is to identify future states that represent measurable progress toward the final goal and to stimulate the reward of transitions that move toward such states. This introduces less-delayed training supervision while preserving a flat goal-conditioned policy.

RSIQL has two components. First, it pretrains an auxiliary goal-conditioned value function to estimate whether a future state along the same trajectory is closer to the final goal. Second, it uses this auxiliary value function to construct a stimulated reward for IQL-style value learning and policy extraction. Importantly, RSIQL does not train a high-level policy and does not predict subgoals during evaluation. Intermediate states are used only during training to improve the value-learning signal.

\subsection{Selecting Effective Intermediate Subgoals}
\label{sec5.1}

We first describe how RSIQL identifies intermediate states that are useful for reward stimulation. Consider an offline trajectory
\[
\tau=(s_0,a_0,r_0,s_1,\ldots,s_T)
\]
and a target goal $g$. For a state $s_t$, we consider the $k$-step future state $s_{t+k}$ as a candidate intermediate state, where $k$ is a fixed subgoal interval and $t+k\le T$. Intuitively, $s_{t+k}$ is useful if it represents progress from $s_t$ toward the final goal $g$.

To estimate progress, RSIQL uses an auxiliary goal-conditioned value function $\bar V_{\omega}(s,g)$. This auxiliary value function is pretrained before reward stimulation and is not updated using stimulated rewards. This separation is important: if the same value function were used both to select subgoals and to learn from stimulated rewards, the subgoal-selection criterion could become circular. In our implementation, $\bar V_{\omega}$ is trained using the GC-IVL objective \citep{park2024hiql}, which provides a goal-conditioned temporal-distance signal without requiring policy improvement.

The simplest progress criterion is
\begin{align}
\bar V_{\omega}(s_{t+k},g) > \bar V_{\omega}(s_t,g).
\label{eq:progress_basic}
\end{align}
Since larger values correspond to states estimated to be closer to the goal under the $0/-1$ sparse-reward convention, \eqnref{eq:progress_basic} selects future states that are predicted to make progress toward $g$.

In practice, offline datasets often contain suboptimal or detour trajectories. To avoid stimulating weak or noisy progress signals, we use a conservative version of the criterion:
\begin{align}
\eta_t(g;k,\delta)=
\mathds{1}
\left[
\bar V_{\omega}(s_{t+k},g)-\bar V_{\omega}(s_t,g)
>
\delta C_k
\right],
\label{eq:progress_indicator}
\end{align}
where $\eta_t(g;k,\delta)$ is the progress indicator, $\delta \geq 0$ controls the strictness of subgoal selection, and
$C_k = \frac{1-\gamma^k}{1-\gamma}=\sum_{i=0}^{k-1}\gamma^i$ is a discount-dependent $k$-step scale factor. Multiplying this factor by $\delta$ makes the progress threshold increase with the temporal interval $k$, rather than applying the same raw value-difference threshold to candidate states at different temporal separations.
In all main experiments, we use the same fixed hyperparameters $k=25$ and $\delta=0.6$ across tasks.
The additional values considered in Section~6.4 are used only for post-hoc sensitivity analysis and are not used to select the reported main results.

This selection rule is local to the sampled offline trajectory. RSIQL therefore avoids graph construction, global planning, or explicit high-level subgoal prediction. The selected intermediate states are used only to determine where reward stimulation should be applied during training.

\subsection{Reward Stimulation at Effective Intermediate Subgoals}
\label{sec5.2}

After identifying progress-making intermediate states, RSIQL constructs a stimulated reward for value learning. We use the $0/-1$ goal-reaching step-cost convention from Section~\ref{secPGMs}, where $r_t=0$ when the goal is reached and $r_t=-1$ otherwise. Under this convention, assigning reward $0$ to a non-goal progress transition is equivalent to adding a positive reward stimulus of magnitude $1$ relative to the original reward $-1$.

Let $\eta_t(g;k,\delta)$ be the progress indicator in \eqnref{eq:progress_indicator}. We define the stimulated reward as
\begin{align}
\tilde r_t
=
r_t + \eta_t(g;k,\delta)(r_{\max}-r_t),
\label{eq:stimulated_reward_general}
\end{align}
where $r_{\max}=0$ under the $0/-1$ convention, which gives
\begin{align}
\tilde r_t
=
\begin{cases}
0, & r_t=0,\\
0, & r_t=-1 \text{ and } \eta_t(g;k,\delta)=1,\\
-1, & r_t=-1 \text{ and } \eta_t(g;k,\delta)=0.
\end{cases}
\label{eq5.21_revised}
\end{align}

\eqnref{eq5.21_revised} makes explicit that RSIQL does not assign reward to all intermediate states. Reward stimulation is applied only when the $k$-step future state is estimated to make sufficient progress toward the final goal. This selectivity is important because stimulating ineffective intermediate states may bias value learning toward detours or subgoals that do not help reach the final goal.

Importantly, setting $\tilde r_t=0$ at a selected non-terminal transition does not treat that transition as terminal or as equivalent to reaching the goal. RSIQL retains the standard one-step Bellman bootstrap, so the critic target at a selected non-terminal transition is
\begin{align}
 y_t^{\mathrm{stim}}=0+\gamma V_\psi(s_{t+1},g),
 \label{eq:stimulated_bellman_target}
\end{align}
whereas an unstimulated non-terminal transition uses
\begin{align}
 y_t^{\mathrm{orig}}=-1+\gamma V_\psi(s_{t+1},g).
 \label{eq:original_bellman_target}
\end{align}
Thus, reward stimulation removes the current non-goal step penalty while preserving the discounted continuation value of the successor state. It neither truncates the trajectory nor replaces the future return by a terminal reward.

The stimulated reward changes the learning signal but not the evaluation objective. During evaluation, the learned policy is still judged by the original goal-reaching success criterion. Thus, reward stimulation should be understood as a training-time credit-assignment mechanism rather than a change to the task definition.

\begin{proposition}[Effect of reward stimulation on training returns]
\label{thm:rsiql_effect}
Let $Q_t^\pi(s_t,a_t,g)$ and $\tilde Q_t^\pi(s_t,a_t,g)$ denote the action-value functions under the original reward $r$ and the stimulated reward $\tilde r$, respectively, for the same policy $\pi$. Suppose $\tilde r_t$ is defined by \eqnref{eq:stimulated_reward_general}. Then, for any state-action pair $(s_t,a_t)$,
\begin{align}
\tilde Q_t^\pi(s_t,a_t,g)-Q_t^\pi(s_t,a_t,g)
=
\mathbb{E}_{\pi}
\left[
\sum_{j=t}^{T}
\gamma^{j-t}
 \eta_j(g;k,\delta)(r_{\max}-r_j)
\mid s_t,a_t,g
\right].
\label{eq:stimulated_return_gap}
\end{align}
Consequently,
\begin{align}
\tilde Q_t^\pi(s_t,a_t,g)\ge Q_t^\pi(s_t,a_t,g).
\end{align}
Moreover, the inequality is strict whenever a progress-making transition with $\eta_j(g;k,\delta)=1$ and $r_j<r_{\max}$ is encountered with nonzero probability.
\end{proposition}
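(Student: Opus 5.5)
The plan is to write both action-values as finite-horizon expected discounted returns along the same trajectory distribution and subtract them. Since $\pi$ and the dynamics $\PP_s$ are the same under $r$ and $\tilde r$, the law of the trajectory $(s_t,a_t,s_{t+1},a_{t+1},\dots,s_T)$ conditioned on $(s_t,a_t,g)$ is identical in both cases; only the per-step reward differs. Following the finite-horizon convention of Lemma~\ref{lem:delayed_goal_separation}, I will write
\begin{align*}
\tilde Q_t^\pi(s_t,a_t,g)=\EE_\pi\!\left[\sum_{j=t}^{T}\gamma^{j-t}\tilde r_j\mid s_t,a_t,g\right],\qquad
Q_t^\pi(s_t,a_t,g)=\EE_\pi\!\left[\sum_{j=t}^{T}\gamma^{j-t} r_j\mid s_t,a_t,g\right].
\end{align*}
Both sums are finite and bounded, since rewards lie in $\{-1,0\}$ and the horizon is finite. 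Linearity of expectation therefore lets me subtract inside a single expectation.

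Next I substitute the definition \eqnref{eq:stimulated_reward_general}, which gives $\tilde r_j-r_j=\eta_j(g;k,\delta)(r_{\max}-r_j)$ pointwise. This yields \eqnref{eq:stimulated_return_gap} immediately. One point needs care: $\eta_j$ depends on the future state $s_{j+k}$ through the auxiliary value $\bar V_\omega$. I will treat $\eta_j$ as a fixed, measurable function of the trajectory. This is legitimate because $\bar V_\omega$ is pretrained and frozen, and the stimulated reward is then a (non-Markovian) trajectory functional. So "the action-value under $\tilde r$" should be read as an expected return, not as a Bellman fixed point. I expect this interpretive point to be the main obstacle. If one insists on a Markovian reading, the fallback is to augment the state with the information needed to determine $\eta_j$, for example the trajectory index or the look-ahead window. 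The return-level identity is unchanged under that augmentation.

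For the inequality, each summand is nonnegative. We have $\gamma^{j-t}>0$, $\eta_j\in\{0,1\}$, and $r_{\max}-r_j\ge 0$ because $r_{\max}=0$ is the largest reward under the $0/-1$ convention. Hence the integrand is almost surely nonnegative and so is its expectation.

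For strictness, suppose that with positive probability there is some $j\in\{t,\dots,T\}$ with $\eta_j=1$ and $r_j<r_{\max}$, so that $r_j=-1$. On that event the integrand is at least $\gamma^{j-t}\ge\gamma^{T-t}>0$. The expectation is therefore at least $\gamma^{T-t}$ times that event's probability, which is strictly positive.
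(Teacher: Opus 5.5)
Your proposal is correct and follows the paper's proof essentially step for step. You write both action-values as finite-horizon expected discounted returns under the same trajectory law, subtract using the pointwise identity $\tilde r_j-r_j=\eta_j(g;k,\delta)(r_{\max}-r_j)$, and then use nonnegativity of each summand, with a strictly positive summand on the stimulation event giving strictness. Your remark that $\eta_j$ depends on $s_{j+k}$, so $\tilde Q_t^\pi$ must be read as an expected return rather than a Markovian Bellman fixed point, is a useful clarification that the paper leaves implicit (it simply defines both quantities as expected returns).
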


\begin{corollary}[Less-delayed training signal from reward stimulation]
\label{cor:shorter_horizon}
Suppose a delayed-goal trajectory remains non-goal through time $T-1$ and first reaches the goal at time $T$. If it contains a selected non-goal progress transition at time $t_i$, where $t \le t_i<T$, $\eta_{t_i}(g;k,\delta)=1$, and $r_{t_i}<r_{\max}$, then the stimulated return from $(s_t,a_t)$ contains an additional contribution
\[
\gamma^{t_i-t}(r_{\max}-r_{t_i}).
\]
Under the original $0/-1$ goal-reaching convention, the goal-completion improvement over the non-goal step penalty is first observed at time $T$ and is discounted by $\gamma^{T-t}$. Since $t_i<T$ and $\gamma\in(0,1)$,
\[
\gamma^{t_i-t} > \gamma^{T-t}.
\]
Thus, reward stimulation introduces a less-delayed positive training contribution. This statement concerns the temporal location of reward information; RSIQL still uses the standard one-step Bellman update.
\end{corollary}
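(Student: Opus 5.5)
The statement to prove is Corollary~\ref{cor:shorter_horizon}; I would derive it from Proposition~\ref{thm:rsiql_effect} together with the reward-timing structure already used in Lemma~\ref{lem:delayed_goal_separation}.

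\textbf{Step 1: write the gap explicitly.} Proposition~\ref{thm:rsiql_effect} expresses the stimulated-minus-original return as
\[
\sum_{j=t}^{T}\gamma^{j-t}\eta_j(g;k,\delta)(r_{\max}-r_j).
\]
I would work pathwise for the fixed delayed-goal trajectory. The same identity follows by summing the pointwise definition \eqnref{eq:stimulated_reward_general} with discount weights, without taking the expectation. Each summand is nonnegative, because $\eta_j\in\{0,1\}$ and $r_j\le r_{\max}$.

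\textbf{Step 2: isolate the selected transition.} Since $t\le t_i<T$ lies in the summation range, the sum is at least the single term at $j=t_i$. That term equals $\gamma^{t_i-t}(r_{\max}-r_{t_i})$, because $\eta_{t_i}(g;k,\delta)=1$. Under the $0/-1$ convention this is $\gamma^{t_i-t}>0$. This establishes the claimed additional contribution.

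\textbf{Step 3: locate the original goal-completion signal.} In the original return, the trajectory receives $-1$ at every step $j=t,\ldots,T-1$ by assumption and $0$ at $T$. So the only deviation from the constant step-cost baseline enters at time $T$, with weight $\gamma^{T-t}$. This is exactly the reward-timing computation behind Lemma~\ref{lem:delayed_goal_separation}.

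\textbf{Step 4: compare the discounts.} Since $0<\gamma<1$ and $t_i-t<T-t$, strict monotonicity of $x\mapsto\gamma^x$ gives $\gamma^{t_i-t}>\gamma^{T-t}$.

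The only step needing care is being precise about what ``first observed at time $T$'' means. I would state it as the claim that, relative to the all-$(-1)$ baseline, the original reward sequence differs only at index $T$. I would also stress that the Bellman update itself is unchanged: the corollary concerns where $\tilde r$ places reward mass, not any change to the operator.
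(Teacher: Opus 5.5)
Your proposal is correct and follows essentially the same argument as the paper. Both read off the added term $\eta_{t_i}(g;k,\delta)(r_{\max}-r_{t_i})$ pathwise at time $t_i$, note that the original $0/-1$ reward departs from the step-cost baseline only at $T$, and conclude by strict monotonicity of $x\mapsto\gamma^x$. Your extra step of embedding this term in the nonnegative gap sum of Proposition~\ref{thm:rsiql_effect} is not used by the paper, but it is harmless and makes explicit that other stimulated transitions can only add to the contribution.
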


The proof details of Proposition~\ref{thm:rsiql_effect} and Corollary~\ref{cor:shorter_horizon} are postponed to Appendix~\ref{app_reward_stimulation}.

Proposition~\ref{thm:rsiql_effect} does not claim that RSIQL increases the optimal value under the original task reward. Rather, it shows that reward stimulation locally removes selected non-goal step penalties while the standard discounted continuation-value bootstrap is preserved. Since IQL extracts policies using advantage-weighted regression, the resulting less-delayed value targets on progress-making transitions can produce more informative advantage weights for goal-directed policy learning. Whether this modified training signal improves performance under the original evaluation reward is an empirical question studied in Section~\ref{sec6}.

Proposition~\ref{thm:rsiql_effect} and Corollary~\ref{cor:shorter_horizon} are
training-signal results rather than policy-invariance guarantees. RSIQL modifies
the reward used for training and therefore should not be interpreted as
potential-based reward shaping, which is designed to preserve optimal policies
under specific reward transformations \citep{ng1999policy}. Instead, RSIQL uses
selective reward stimulation to improve credit assignment in offline sparse-reward
goal-conditioned learning. All policies are evaluated using the original
goal-reaching metric in Section~\ref{sec6}. 

Algorithm~\ref{alg:RSIQL} summarizes the complete training procedure of RSIQL. The method first pretrains an auxiliary progress value function $\bar{V}$ to evaluate whether intermediate states are effective subgoals for reaching the final goal. It then applies reward stimulation to those subgoals that are estimated to promote progress toward the goal, and finally optimizes the policy and value functions using the standard IQL update rules under the stimulated rewards.

\begin{algorithm}[ht]
   \caption{Reward Stimulation Implicit Q-Learning (RSIQL)}
   \label{alg:RSIQL}
\begin{algorithmic}
   \STATE {\bfseries Input:} offline dataset $\mathcal D$, subgoal interval $k$, strictness parameter $\delta$, target-network update rate $\lambda$.
   \STATE {\bfseries Initialize:} Q-functions $Q_{\theta_1},Q_{\theta_2}$, value function $V_{\psi}$, policy $\pi_{\phi}$, auxiliary value function $\bar V_{\omega}$, and target Q-functions $Q_{\theta_1'},Q_{\theta_2'}$.

   \STATE \textbf{Pretrain auxiliary progress value function}
   \FOR{each pretraining step}
      \STATE Sample goal-conditioned data $(s,g)$ from $\mathcal D$.
      \STATE Update $\bar V_{\omega}(s,g)$ using the GC-IVL~\citep{park2024hiql} objective.
   \ENDFOR

   \STATE \textbf{Train RSIQL with reward stimulation}
   \FOR{each training step}
      \STATE Sample a minibatch of trajectory segments $(s_t,a_t,r_t,s_{t+1},s_{t+k},g)$ from $\mathcal D$.
      \STATE Compute progress indicator $\eta_t(g;k,\delta)$ by \eqnref{eq:progress_indicator}
      \STATE Construct stimulated reward $\tilde r_t$ by \eqnref{eq:stimulated_reward_general}
      \STATE {\bfseries Update:}
      \STATE Update $V_{\psi}(s,g)$ using the GCIQL value loss \eqnref{eq_iql_v}.
      \STATE Update $\pi_{\phi}(a\mid s,g)$ using the advantage-weighted GCIQL policy loss.
      \STATE Update $Q_{\theta_1},Q_{\theta_2}$ using the stimulated Bellman target
$\tilde r_t+\gamma V_\psi(s_{t+1},g)$.
      
      \STATE Update target networks: $\theta_i' \leftarrow \lambda \theta_i+(1-\lambda)\theta_i', \; i=1,2$.

   \ENDFOR
\end{algorithmic}
\end{algorithm}

\section{Experiments}
\label{sec6}

We evaluate RSIQL on offline goal-conditioned reinforcement learning benchmarks with sparse rewards and long-horizon dependencies. The experiments are designed to answer four questions:

\begin{enumerate}
    \item Does reward stimulation improve over a flat goal-conditioned IQL baseline?
    \item Can RSIQL match or improve upon hierarchical offline GCRL methods without learning a high-level subgoal policy?
    \item How sensitive is RSIQL to the subgoal interval $k$ and the progress-selection threshold $\delta$?
    \item Which component of RSIQL is responsible for the performance gain?
\end{enumerate}

We consider both state-based and pixel-based environments. For state-based single-goal tasks, we use D4RL~\citep{fu2020d4rl} AntMaze and FrankaKitchen~\citep{gupta2019relay}. For multi-goal and visual tasks, we use OGBench~\citep{park2024ogbench}. All policies are evaluated using the original task reward and success metric; reward stimulation is used only during training.

\subsection{Experimental Setup}
\label{sec6_setup}

\paragraph{Benchmarks.}
We evaluate on D4RL goal-conditioned tasks, including AntMaze and FrankaKitchen. Following prior offline GCRL work, we use the \textit{-v2} datasets for AntMaze-Medium and AntMaze-Large, and the \textit{-v0} AntMaze-Ultra datasets introduced by \citet{jiang2022efficient}, with both play and diverse variants. For Kitchen, we evaluate on the \textit{-v0} partial and mixed datasets. Since the original Kitchen reward counts completed subtasks, our RSIQL runs use a sparser terminal-style training reward that retains only the maximum task-completion reward, making the training objective closer to sparse goal reaching. The matched Kitchen ablations in Table~\ref{tab:ablation_stimulation} use the same sparse reward conversion for every variant. In contrast, the published Kitchen baseline numbers in Table~\ref{tab:d4rl_main} retain the reward protocol used in their source paper, so those cross-paper Kitchen comparisons should be interpreted as contextual rather than strictly reward-matched.

We also evaluate on OGBench, which contains more diverse offline goal-conditioned tasks, including state-based and pixel-based multi-goal environments. For pixel-based tasks, we follow OGBench and use a shared state-goal visual encoder with an IMPALA encoder \citep{espeholt2018impala} for value and policy learning.

\paragraph{Baselines.}
We compare RSIQL with three categories of baselines. First, we include flat goal-conditioned methods, including goal-conditioned behavioral cloning (GCBC)~\citep{ding2019goal,ghosh2019learning}, goal-conditioned implicit value learning (GCIVL)~\citep{park2024hiql}, and goal-conditioned IQL~\citep{kostrikov2021offline} (GCIQL). Second, we include hierarchical or subgoal-based methods, including HGCBC~\citep{gupta2019relay,lynch2020learning}, GC-POR~\citep{xu2022policy}, and HIQL~\citep{park2024hiql}. Third, we include sequence-modeling and planning-style baselines such as Trajectory Transformer~\citep{janner2021offline} and TAP~\citep{jiang2022efficient}. For Table~\ref{tab:d4rl_main}, the GCBC, HGCBC, GCIQL, GC-POR, and HIQL results are taken from the HIQL evaluation~\citep{park2024hiql}; TAP and Trajectory Transformer results follow the values reported there from prior work. For Table~\ref{tab:ogbench_main}, baseline numbers follow the official OGBench reference results~\citep{park2024ogbench}. On OGBench, we additionally compare with contrastive RL (CRL)~\citep{eysenbach2022contrastive} and quasimetric RL (QRL)~\citep{wang2023optimal}.

\paragraph{Evaluation protocol.}
Main RSIQL results and matched reward-stimulation ablations are averaged over six random seeds, while the post-hoc $k$ and $\delta$ sensitivity sweeps use five random seeds as indicated in their figure captions. For OGBench, each random-seed run is evaluated over the benchmark-specified set of goals, with the within-run score following the benchmark aggregation over those goals. For our RSIQL results, we report the mean and standard deviation across the six random-seed runs for each task. Externally reported baselines retain the seed and evaluation protocol of their source results. All learned RSIQL policies are evaluated using the original environment success criterion rather than the stimulated training reward.

\paragraph{Fairness and implementation.}
RSIQL uses the same IQL backbone as GCIQL. The main algorithmic difference is the stimulated reward defined in \eqnref{eq:stimulated_reward_general}. The auxiliary value function used for progress estimation is pretrained before RSIQL training and is not updated using stimulated rewards. All main RSIQL experiments use the standard one-step Bellman update, including the pixel-based OGBench tasks, and use the same fixed core hyperparameters $\tau=0.7$, $\beta=3$, $k=25$, and $\delta=0.6$. Hyperparameters, network architectures, and training details are provided in Appendix~\ref{app_experiment_details}.

\subsection{Main Results on D4RL Goal-Conditioned Benchmarks}
\label{sec6_d4rl}

Table~\ref{tab:d4rl_main} reports performance on D4RL AntMaze and Kitchen tasks. On AntMaze, RSIQL improves over the reported GCIQL result on all six datasets. On Kitchen, RSIQL is trained with the sparse reward conversion described above, whereas the published baselines retain their original source protocol; these Kitchen rows therefore provide contextual performance comparisons rather than a strictly reward-matched causal comparison. The matched reward-stimulation ablation in Table~\ref{tab:ablation_stimulation} provides the controlled comparison against GCIQL.

\begin{table*}[h!]
\caption{Performance of RSIQL on D4RL goal-conditioned tasks. AntMaze-Medium and AntMaze-Large use the \textit{-v2} datasets, AntMaze-Ultra uses the \textit{-v0} datasets, and Kitchen uses \textit{-v0}. GCBC, HGCBC, GCIQL, GC-POR, and HIQL results are from the HIQL evaluation~\citep{park2024hiql}; TAP and TT follow the values reported there from prior work. RSIQL results are our six-seed runs. The published Kitchen baselines use their source reward protocol, whereas RSIQL uses the sparse reward conversion described in Section~\ref{sec6_setup}. Bold marks the highest reported mean in each row; if all methods are tied, no entry is bolded.}
\label{tab:d4rl_main}
\begin{center}
\begin{small}
\begin{tabular}{llcccccccc}
\toprule
&Dataset& GCBC & HGCBC & GCIQL  &  GC-POR & TAP & TT & HIQL(repr.) &RSIQL(Ours)\\
\midrule
&antmaze-medium-diverse    &67.3{\tiny $\pm$10.1} &71.6{\tiny $\pm$8.9} &63.5{\tiny $\pm$14.6} &74.8{\tiny $\pm$11.9} &85.0 &\pmb{100.0} &86.8{\tiny $\pm$4.6} & 83.3{\tiny $\pm$5.1}  \\
&antmaze-medium-play      &71.9{\tiny $\pm$16.2} &66.3{\tiny $\pm$9.2} &70.9{\tiny $\pm$11.2} &71.4{\tiny $\pm$10.9} &78.0 & \pmb{93.3} & 84.1{\tiny $\pm$10.8} & 80.8{\tiny $\pm$4.4} \\
Locomotion tasks&antmaze-large-diverse     &20.2{\tiny $\pm$9.1} &63.9{\tiny $\pm$10.4} &50.7{\tiny $\pm$18.8} &49.0{\tiny $\pm$17.2} &82.0 & 60.0& \pmb{88.2{\tiny $\pm$5.3}}& 78.5{\tiny $\pm$4.5} \\
&antmaze-large-play   &23.1{\tiny $\pm$15.6} &64.7{\tiny $\pm$14.5} &56.5{\tiny $\pm$14.4} &63.2{\tiny $\pm$16.1} &74.0 & 66.7& \pmb{86.1{\tiny $\pm$7.5}} &79.8{\tiny $\pm$5.2} \\
&antmaze-ultra-diverse   &14.4{\tiny $\pm$9.7} &39.4{\tiny $\pm$20.6} &21.6{\tiny $\pm$15.2} &29.8{\tiny $\pm$13.6} &26.0& 33.3& 52.9{\tiny $\pm$17.4} &\pmb{60.1{\tiny $\pm$8.6}}\\
&antmaze-ultra-play     &20.7{\tiny $\pm$9.7} &38.2{\tiny $\pm$18.1} &29.8{\tiny $\pm$12.4} &31.0{\tiny $\pm$19.4} &22.0& 20.0& 39.2{\tiny $\pm$14.8} & \pmb{68.8{\tiny $\pm$4.7}} \\
\midrule
Manipulation tasks&kitchen-partial   &38.5{\tiny $\pm$11.8} &32.0{\tiny $\pm$16.7} & 39.2{\tiny $\pm$13.5} &18.4{\tiny $\pm$14.3} &- & - & 65.0{\tiny $\pm$9.2}& \pmb{69.3{\tiny $\pm$2.6}} \\
&kitchen-mixed   &46.7{\tiny $\pm$20.1} &46.8{\tiny $\pm$17.6} &51.3{\tiny $\pm$12.8} &27.9{\tiny $\pm$17.9} &-&-& \pmb{67.7{\tiny $\pm$6.8}}& 63.8{\tiny $\pm$6.7}\\
\midrule
Average  &-&37.9 &52.9 &47.9 &45.7 &61.2 &62.2 &71.3 &\pmb{73.1} \\
\bottomrule
\end{tabular}
\end{small}
\end{center}
\end{table*}

Compared with hierarchical methods, RSIQL achieves competitive overall performance while maintaining a flat policy structure. On AntMaze-Ultra, RSIQL performs particularly well, improving over HIQL on both ultra-diverse and ultra-play. This result is consistent with the motivation of RSIQL: when goal-completion information is temporally delayed, intermediate reward stimulation can provide less-delayed value-learning targets.

On medium and large AntMaze tasks, RSIQL is competitive but not uniformly better than the strongest baselines. For example, HIQL and trajectory-modeling methods can perform better on some datasets, suggesting that explicit hierarchy or trajectory-level modeling can still be advantageous when subgoal prediction is reliable. On Kitchen, RSIQL attains strong scores under the sparse reward conversion. Because the published Kitchen baselines in Table~\ref{tab:d4rl_main} use their original reward protocol, we do not interpret these cross-paper rows as a strictly controlled comparison; the matched Kitchen comparison is instead provided in Table~\ref{tab:ablation_stimulation}.

\subsection{Main Results on OGBench}
\label{sec6_ogbench}

Table~\ref{tab:ogbench_main} reports results on OGBench state-based and pixel-based tasks. RSIQL improves over GCIQL on average, indicating that reward stimulation is beneficial beyond the D4RL setting. It also achieves performance comparable to HIQL while avoiding high-level subgoal prediction at test time.

\begin{table*}[h!]
\caption{Performance of RSIQL on OGBench goal-conditioned tasks. Baseline results follow the official OGBench reference results~\citep{park2024ogbench}; RSIQL results are our six-seed runs. Bold marks the highest reported mean in each row; if all methods are tied, no entry is bolded.}
\label{tab:ogbench_main}
\begin{center}
\begin{small}
\begin{tabular}{llccccccc}
\toprule
&Dataset& GCBC & GCIVL & GCIQL  &  QRL & CRL & HIQL  &RSIQL(Ours)\\
\midrule
\multicolumn{9}{c}{\textit{State-based tasks}}\\
\midrule
&antmaze-medium-navigate   &29{\tiny $\pm$4} &72{\tiny $\pm$8} & 71{\tiny $\pm$4} &88{\tiny $\pm$3} & 95{\tiny $\pm$1}& \pmb{96{\tiny $\pm$1}}& 93.3{\tiny $\pm$2.1} \\
&antmaze-large-navigate   &24{\tiny $\pm$2} &16{\tiny $\pm$5} &34{\tiny $\pm$4} &75{\tiny $\pm$6} & 83{\tiny $\pm$4}& \pmb{91{\tiny $\pm$2}}& 88.5{\tiny $\pm$4.1}\\
&antmaze-giant-navigate   &0{\tiny $\pm$0} &0{\tiny $\pm$0} &0{\tiny $\pm$0} &0{\tiny $\pm$0} & 14{\tiny $\pm$3}& 16{\tiny $\pm$3}& \pmb{19.7{\tiny $\pm$3.2}}\\
Locomotion tasks&antmaze-medium-stitch   &45{\tiny $\pm$11} &44{\tiny $\pm$6} & 29{\tiny $\pm$6} &59{\tiny $\pm$7} & 53{\tiny $\pm$6}& \pmb{94{\tiny $\pm$1}}& 89.9{\tiny $\pm$4.4} \\
&antmaze-large-stitch   &3{\tiny $\pm$3} &18{\tiny $\pm$2} &7{\tiny $\pm$2} &18{\tiny $\pm$2} & 11{\tiny $\pm$2}& \pmb{67{\tiny $\pm$5}}& 64.4{\tiny $\pm$5.7}\\
&antmaze-giant-stitch   &0{\tiny $\pm$0} &0{\tiny $\pm$0} &0{\tiny $\pm$0} &0{\tiny $\pm$0} & 0{\tiny $\pm$0}& 2{\tiny $\pm$2}& \pmb{10.7{\tiny $\pm$5.3}}\\
\midrule
&cube-single-play-v0 & $6$ {\tiny $\pm 2$} & $53$ {\tiny $\pm 4$} & \pmb{68 {\tiny $\pm 6$}} & $5$ {\tiny $\pm 1$} & $19$ {\tiny $\pm 2$} & $15$ {\tiny $\pm 3$} & $61.3$ {\tiny $\pm 4.4$} \\
Manipulation tasks&cube-double-play-v0 & $1$ {\tiny $\pm 1$} & $36$ {\tiny $\pm 3$} & $40$ {\tiny $\pm 5$} & $1$ {\tiny $\pm 0$} & $10$ {\tiny $\pm 2$} & $6$ {\tiny $\pm 2$} & $\mathbf{46.8}$ {\tiny $\pm 5.1$} \\
&scene-play-v0 & $5$ {\tiny $\pm 1$} & $42$ {\tiny $\pm 4$} & $51$ {\tiny $\pm 4$} & $5$ {\tiny $\pm 1$} & $19$ {\tiny $\pm 2$} & $38$ {\tiny $\pm 3$} & $\mathbf{65.6}$ {\tiny $\pm 3.8$} \\
\midrule
\multicolumn{9}{c}{\textit{Pixel-based tasks}}\\
\midrule
&visual-antmaze-medium-navigate   &11{\tiny $\pm$2} &22{\tiny $\pm$2} & 11{\tiny $\pm$1} &0{\tiny $\pm$0} & \pmb{94{\tiny $\pm$1}}& 93{\tiny $\pm$4}& 89.3{\tiny $\pm$6.5} \\
&visual-antmaze-large-navigate   &4{\tiny $\pm$0} &5{\tiny $\pm$1} &4{\tiny $\pm$1} &0{\tiny $\pm$0} & \pmb{84{\tiny $\pm$1}}& 53{\tiny $\pm$9}& 50.2{\tiny $\pm$6.6}\\
&visual-antmaze-giant-navigate   &0{\tiny $\pm$0} &1{\tiny $\pm$1} &0{\tiny $\pm$0} &0{\tiny $\pm$0} & \pmb{47{\tiny $\pm$2}}& 6{\tiny $\pm$4}& 13.5{\tiny $\pm$7.3}\\
Locomotion tasks&visual-antmaze-medium-stitch   &67{\tiny $\pm$4} &6{\tiny $\pm$2} & 2{\tiny $\pm$0} &0{\tiny $\pm$0} & 69{\tiny $\pm$2}& \pmb{87{\tiny $\pm$2}}& 81.4{\tiny $\pm$5.3} \\
&visual-antmaze-large-stitch   &24{\tiny $\pm$3} &1{\tiny $\pm$1} &0{\tiny $\pm$0} &1{\tiny $\pm$1} & 11{\tiny $\pm$3}& \pmb{28{\tiny $\pm$2}}& 22.6{\tiny $\pm$5.9}\\
&visual-antmaze-giant-stitch   &0{\tiny $\pm$0} &0{\tiny $\pm$0} &0{\tiny $\pm$0} &0{\tiny $\pm$0} & 0{\tiny $\pm$0}& 0{\tiny $\pm$0}& 0{\tiny $\pm$0}\\
\midrule
&visual-cube-single-play-v0 & $5$ {\tiny $\pm 1$} & $60$ {\tiny $\pm 5$} & $30$ {\tiny $\pm 5$} & $41$ {\tiny $\pm 15$} & $31$ {\tiny $\pm 15$} & $\mathbf{89}$ {\tiny $\pm 0$} & $68.5$ {\tiny $\pm 5.2$} \\
Manipulation tasks&visual-cube-double-play-v0 & $1$ {\tiny $\pm 1$} & $10$ {\tiny $\pm 2$} & $1$ {\tiny $\pm 1$} & $5$ {\tiny $\pm 0$} & $2$ {\tiny $\pm 1$} & $\mathbf{39}$ {\tiny $\pm 2$} & $29.3$ {\tiny $\pm 4.7$} \\
&visual-scene-play-v0 & $12$ {\tiny $\pm 2$} & $25$ {\tiny $\pm 3$} & $12$ {\tiny $\pm 2$} & $10$ {\tiny $\pm 1$} & $11$ {\tiny $\pm 2$} & $\mathbf{49}$ {\tiny $\pm 4$} & $38.8$ {\tiny $\pm 6.5$} \\
\midrule
Average &- &13.2 &22.8 &20.0 &17.1 &36.3 &48.3 &\pmb{51.9} \\
\bottomrule
\end{tabular}
\end{small}
\end{center}
\end{table*}

The gains are most pronounced on tasks where intermediate progress can be reliably identified from offline trajectories. In manipulation tasks such as cube and scene environments, RSIQL substantially improves over several flat baselines, suggesting that selective reward stimulation can provide useful training signals for compositional goal reaching.

However, RSIQL is less competitive on some visual long-horizon navigation tasks. In these tasks, valid state-goal pairs can be sparse, visual representations may be harder to learn, and the auxiliary value function may provide noisier progress estimates. This limitation is consistent with the design of RSIQL: reward stimulation is most useful when the dataset contains informative intermediate states and the auxiliary value function can rank them reliably.

\subsection{Ablation Studies}
\label{sec6_ablation}

We conduct ablations to isolate the effect of reward stimulation and the design choices in RSIQL.

\paragraph{Reward-stimulation ablations.}
We compare RSIQL with several variants to isolate the role of selective progress-aware stimulation. No stimulation corresponds to the matched GCIQL baseline. Random stimulation applies reward stimulation to randomly selected transitions, using the same average stimulation frequency as RSIQL. Unfiltered $k$-step stimulation removes the auxiliary-value progress criterion and stimulates every transition for which a valid $k$-step future state $s_{t+k}$ exists. Equivalently, it sets $\eta_t(g;k,\delta)=1$ for all transitions with $t+k\leq T$. The full RSIQL method applies stimulation only when the auxiliary value function predicts that $s_{t+k}$ makes sufficient progress toward the final goal.

Table~\ref{tab:ablation_stimulation} isolates the effect of the reward-stimulation design under matched training settings. The no-stimulation GCIQL baseline obtains an average score of 38.8. Random stimulation achieves the same average score of 38.8, showing that adding intermediate reward signals at arbitrary transitions does not by itself improve performance. Unfiltered $k$-step stimulation reaches 41.9, providing only a modest improvement despite substantially increasing the density of intermediate rewards. In contrast, RSIQL achieves an average score of 72.6 and consistently improves performance across all three tasks. These results indicate that the main benefit of RSIQL does not come from reward densification alone, but from selectively stimulating transitions whose future states are estimated to make meaningful progress toward the final goal.

\begin{table}[t]
\centering
\caption{
Ablation of reward stimulation design. Results are averaged over the same seeds as the main experiments.
All variants use the same training and evaluation protocol; for Kitchen-Partial, the same sparse terminal-style reward conversion is used for every method.
No stimulation corresponds to the matched GCIQL baseline.
}
\label{tab:ablation_stimulation}
\begin{tabular}{lcccc}
\toprule
Method & AntMaze-Large-Play & AntMaze-Ultra-Play & Kitchen-Partial & Average \\
\midrule
No stimulation (GCIQL) & 53.2{\tiny $\pm$16.1} & 31.5{\tiny $\pm$11.9} & 31.7{\tiny $\pm$15.4} & 38.8 \\
Random stimulation & 49.3{\tiny $\pm$14.6} & 31.5{\tiny $\pm$11.5} & 35.7{\tiny $\pm$13.7} & 38.8 \\
Unfiltered $k$-step stimulation & 46.2{\tiny $\pm$9.8} & 38.3{\tiny $\pm$12.6} & 41.2{\tiny $\pm$8.6} & 41.9 \\
RSIQL & 79.8{\tiny $\pm$5.2} & 68.8{\tiny $\pm$4.7} & 69.3{\tiny $\pm$2.6} & 72.6 \\
\bottomrule
\end{tabular}
\end{table}

\paragraph{Effect of the subgoal interval $k$.}
The interval $k$ controls the temporal distance between the current state $s_t$ and the candidate intermediate state $s_{t+k}$. If $k$ is too small, candidate states may be too close to provide substantially less-delayed supervision. If $k$ is too large, the candidate state itself becomes temporally distant, reducing the benefit of introducing less-delayed supervision.

\begin{figure}
\begin{center}
\includegraphics[width=0.99\columnwidth]{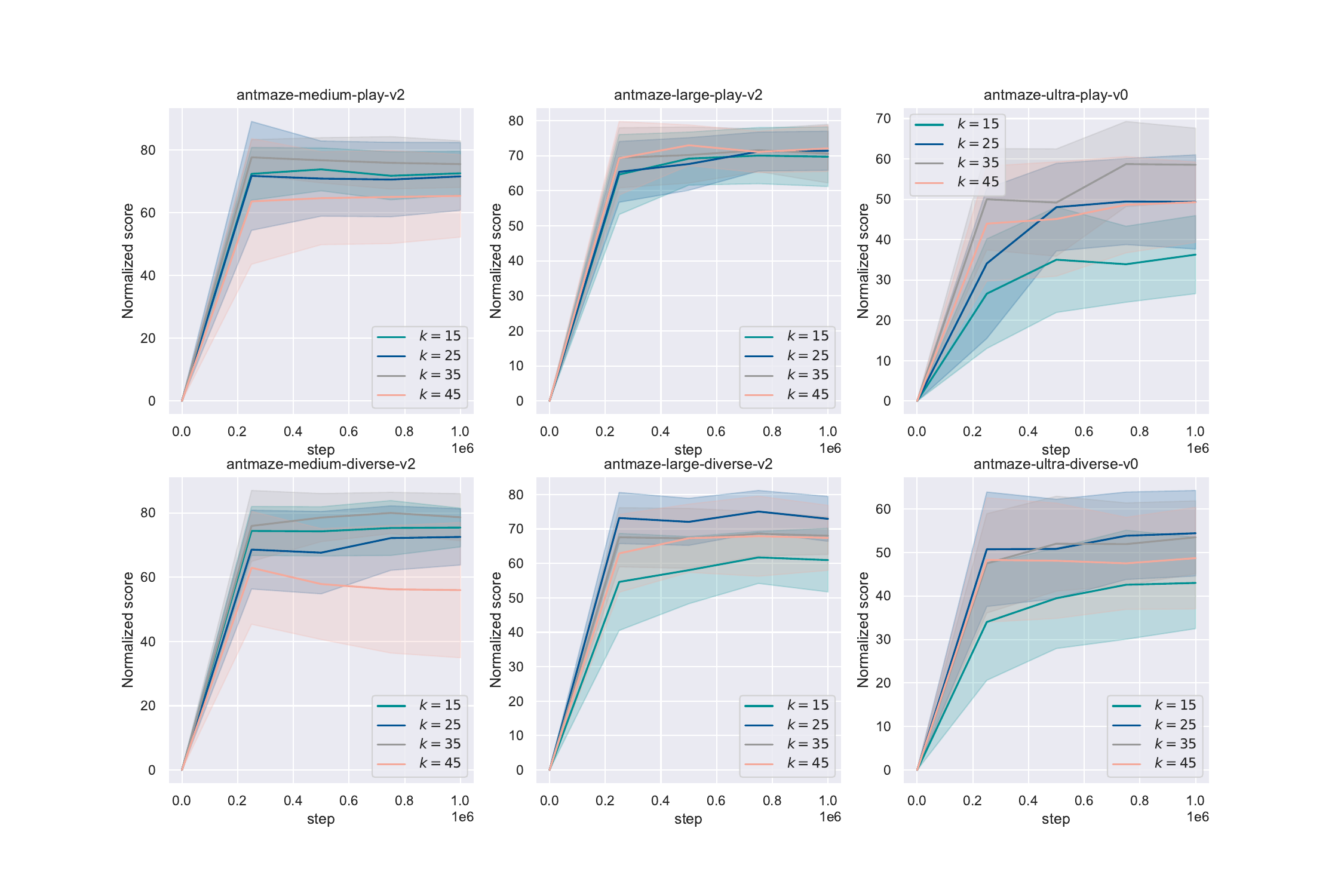}
\caption{
Training curve of AntMaze tasks using different subgoal interval $k$ across 5 random seeds with 95\% confidence intervals (shaded regions). The evaluation interval is 250000 with evaluation episode length 100.
}
\label{fig:k_antmaze}
\end{center}
\end{figure}

\begin{figure}
\begin{center}
\includegraphics[width=0.99\columnwidth]{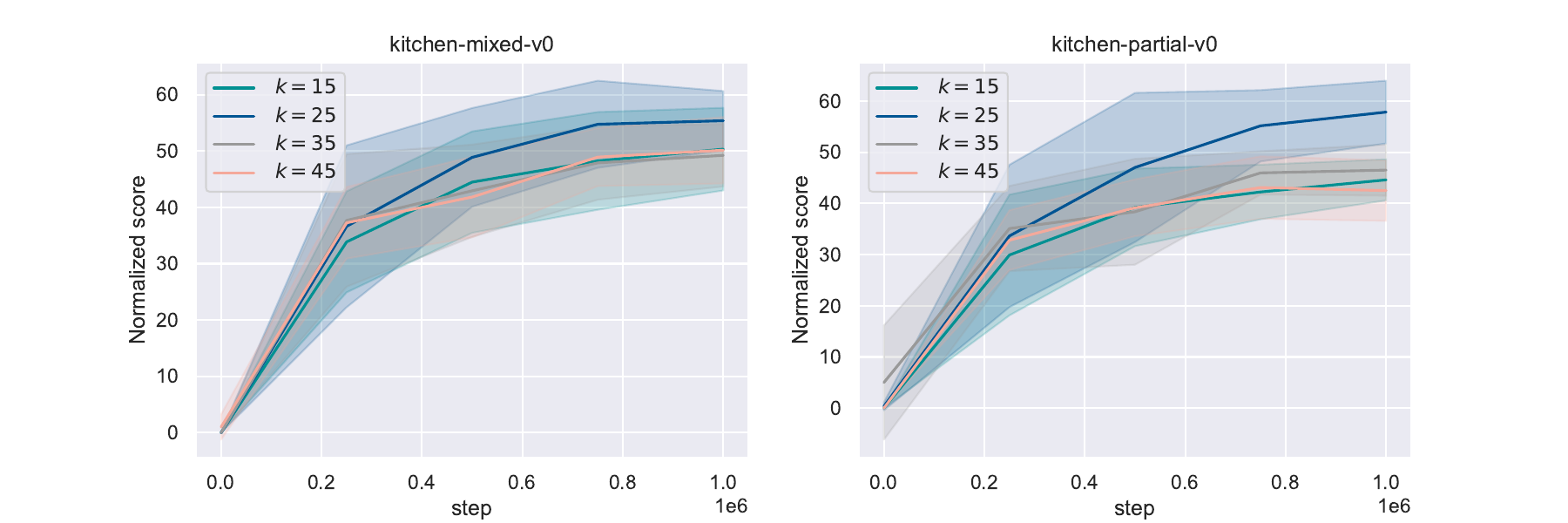}
\caption{
Training curve of Kitchen tasks using different subgoal interval $k$ across 5 random seeds with 95\% confidence intervals (shaded regions). The evaluation interval is 250000 with evaluation episode length 100.
}
\label{fig:k_kitchen}
\end{center}
\end{figure}

Figures~\ref{fig:k_antmaze} and~\ref{fig:k_kitchen} report a post-hoc sensitivity analysis over $k\in\{15,25,35,45\}$. Importantly, these experiments are not used for per-task hyperparameter selection: all main results use the single fixed value $k=25$. Performance is generally strongest for intermediate values of $k$, while both very small and very large intervals tend to be less effective. Some dataset-specific variation remains; for example, $k=35$ can perform better on several play datasets, whereas $k=25$ is often competitive on diverse datasets. These differences suggest that trajectory structure affects the most favorable temporal spacing, but the fixed choice $k=25$ provides a common setting across all reported main experiments.

In general, intermediate values perform best, while very small or very large values can reduce performance. This supports the interpretation that reward stimulation should be neither too dense nor too sparse.

\paragraph{Effect of the progress threshold $\delta$.}
The threshold $\delta\in[0,1]$ controls the strictness of progress-based subgoal selection. Larger values require a larger auxiliary-value improvement before reward stimulation is applied, whereas smaller values make the criterion more permissive. A threshold that is too small may admit weak or ineffective progress signals, while a threshold that is too large may reject useful intermediate states.
\begin{figure}
\begin{center}
\includegraphics[width=0.99\columnwidth]{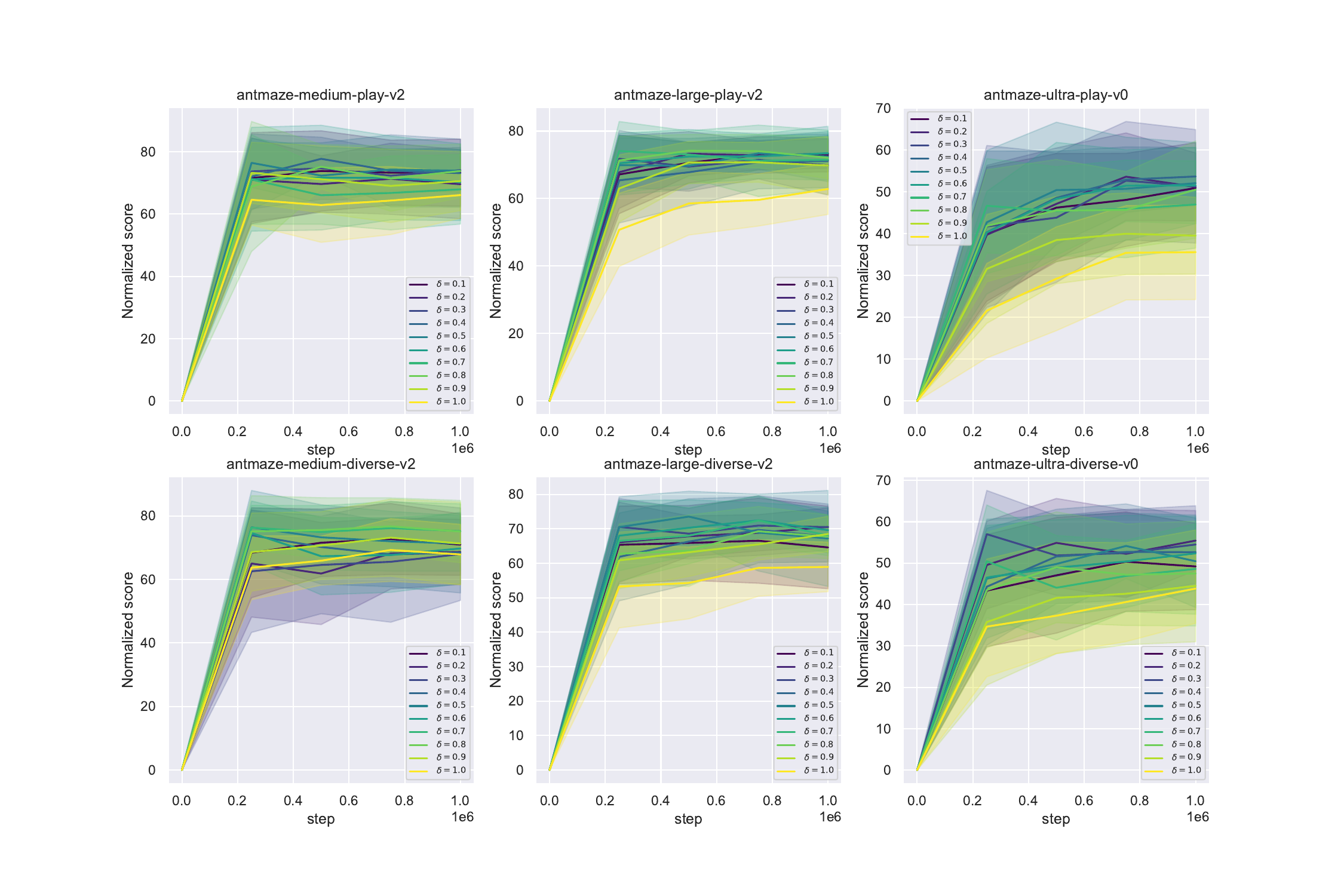}
\caption{
Training curve of AntMaze tasks using different subgoal effectiveness ratio $\delta$ across 5 random seeds with 95\% confidence intervals (shaded regions). The evaluation interval is 250000 with evaluation episode length 100.
}
\label{fig:delta_antmaze}
\end{center}
\end{figure}
\begin{figure}
\begin{center}
\includegraphics[width=0.99\columnwidth]{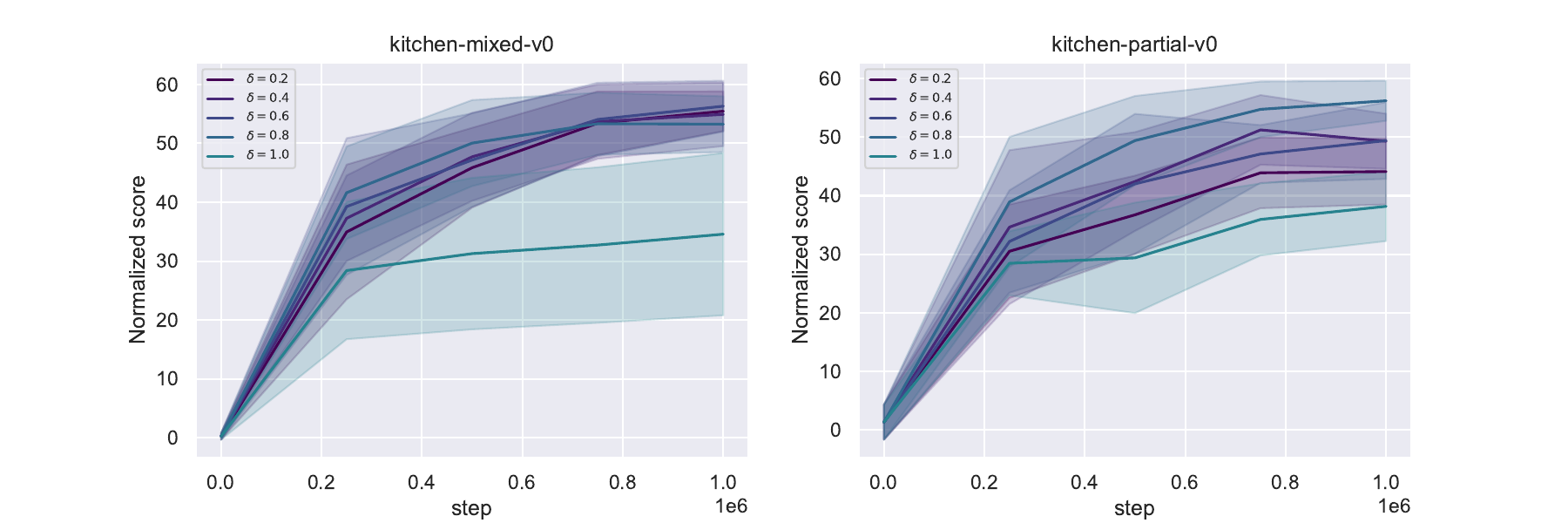}
\caption{
Training curve of Kitchen tasks using different subgoal effectiveness ratio $\delta$ across 5 random seeds with 95\% confidence intervals (shaded regions). The evaluation interval is 250000 with evaluation episode length 100.
}
\label{fig:delta_kitchen}
\end{center}
\end{figure}

Figures~\ref{fig:delta_antmaze} and~\ref{fig:delta_kitchen} present a post-hoc sensitivity analysis of this threshold. We evaluate $\delta\in\{0.1,0.2,\ldots,1.0\}$ on AntMaze and $\delta\in\{0.2,0.4,0.6,0.8,1.0\}$ on Kitchen. As with $k$, these sweeps are not used to tune the main results: all main experiments use the fixed value $\delta=0.6$. Performance tends to degrade near the extremes. When $\delta$ is close to zero, the selector becomes permissive and can stimulate transitions that provide little meaningful progress; when $\delta$ approaches one, the criterion can become overly restrictive and discard useful intermediate supervision. Across the tested tasks, performance is comparatively stable for intermediate thresholds, particularly around $\delta\in[0.4,0.8]$, supporting the use of $\delta=0.6$ as a single common setting.

\subsection{Computational Efficiency and Policy Structure}
\label{sec6_efficiency}

RSIQL avoids learning a high-level subgoal policy and does not perform subgoal prediction during evaluation. To quantify this difference, we compare the number of learned policy components and test-time inference cost with hierarchical baselines.

\begin{table}[t]
\centering
\caption{
Comparison of policy structure and computational cost.
Times are measured on the same hardware. RSIQL auxiliary pretraining is a one-time task-level cost and can be reused across repeated runs on the same task.
}
\label{tab:complexity}
\begin{tabular}{lcccccc}
\toprule
Method & High-level policy & Test-time subgoal prediction & Policy structure & Main training & Aux. pretraining & First-run total \\
\midrule
GCIQL & No & No & Flat & 2.0h & -- & 2.0h \\
HIQL & Yes & Yes & Hierarchical & 4.5h & -- & 4.5h \\
RSIQL & No & No & Flat & 2.5h & 0.33h & 2.83h \\
\bottomrule
\end{tabular}
\end{table}

Table~\ref{tab:complexity} compares policy structure and computational cost for the \textit{AntMaze-Ultra-Play} task.
RSIQL has the same flat deployment structure as GCIQL and does not require test-time subgoal prediction.
Compared with HIQL, RSIQL avoids training and deploying a high-level policy.
RSIQL introduces an auxiliary value function for progress estimation, but this network is used only during training.
In our implementation, auxiliary value pretraining takes about 20 minutes on the same task and can be reused for repeated RSIQL runs on that task.
Even including this one-time pretraining cost, RSIQL remains substantially cheaper than HIQL in our setting.

\subsection{Discussion of Failure Cases}
\label{sec6_failure}

RSIQL is most effective when offline trajectories contain intermediate states that make meaningful progress toward the final goal and when the auxiliary value function can identify such progress. Its performance can degrade in three cases.

First, if the dataset has poor goal coverage, then useful progress-making state-goal pairs may be rare. In this case, reward stimulation has few reliable transitions to select. Second, if the auxiliary value function is inaccurate, RSIQL may stimulate ineffective intermediate states, which can bias value learning toward detours. Third, in high-dimensional visual tasks, representation learning can become a bottleneck, making progress estimation less reliable.

These failure modes are consistent with the empirical results on the most difficult OGBench visual navigation tasks. They also suggest future directions, such as improving goal sampling, learning more robust progress estimators, and combining reward stimulation with stronger visual representation learning.

\section{Limitations}
\label{sec7}

RSIQL has several limitations. First, its effectiveness depends on the quality of the offline dataset. Reward stimulation is useful only when the dataset contains intermediate states that make meaningful progress toward the final goal. If goal-reaching trajectories are absent or if the dataset has poor coverage of relevant state-goal pairs, RSIQL may have few reliable transitions to stimulate.

Second, RSIQL depends on the auxiliary goal-conditioned value function used for progress estimation. Although this value function only needs to provide a coarse relative ranking between $s_t$ and $s_{t+k}$, inaccurate estimates can lead to incorrect reward stimulation. In particular, if the auxiliary value function assigns high value to detour states or fails to distinguish useful progress from irrelevant motion, RSIQL may introduce misleading training signals. This limitation is consistent with the ablation results in Section~\ref{sec6_ablation}, where random or unfiltered stimulation performs substantially worse than progress-aware stimulation.

Third, RSIQL modifies the reward used during training and therefore does not provide a policy-invariance guarantee under the original task reward. Unlike potential-based reward shaping, RSIQL is intended as a training-time credit-assignment mechanism. All policies are evaluated under the original task reward and success metric, but the stimulated training objective may in principle bias learning if progress selection is inaccurate.

Fourth, our theoretical analysis is based on a stylized finite-horizon delayed-goal setting and a local value-error model. These assumptions allow us to isolate the reward-propagation mechanism, but they do not fully characterize early goal arrival, partial observability, or representation-learning errors in high-dimensional visual tasks. Extending the analysis beyond the delayed-goal setting and local error model remains an important direction.

Finally, RSIQL is less effective when progress estimation is difficult, such as in extremely sparse multi-goal datasets or visually complex environments. In these settings, valid state-goal pairs may be rare and the auxiliary value function may provide noisy progress rankings. Future work could combine reward stimulation with improved goal sampling,
uncertainty-aware progress estimation, and stronger representation learning
for visually complex or poorly covered offline datasets.

\section{Conclusion}
\label{sec8}

We studied offline goal-conditioned reinforcement learning from a reward-propagation perspective. Our analysis identifies horizon-dependent scaling of the absolute goal-reaching message and shows, in a stylized delayed-goal setting, that success--failure value separation can become small relative to local value-estimation error at early states. These mechanism-level results motivate adding less-delayed supervision at informative intermediate states.

Based on this idea, we proposed Reward Stimulation Implicit Q-Learning (RSIQL), a simple flat offline GCRL method. RSIQL uses an auxiliary goal-conditioned value function to identify intermediate states that are estimated to make progress toward the final goal, and then applies selective reward stimulation to strengthen value learning. Unlike hierarchical methods, RSIQL does not train a high-level subgoal policy and does not require subgoal prediction at test time.

Experiments on D4RL and OGBench show that RSIQL improves over goal-conditioned IQL on average and achieves competitive performance relative to hierarchical offline GCRL methods while retaining a flat deployment structure. Ablation studies further show that the improvement comes from progress-aware reward stimulation rather than from reward densification alone: random stimulation and unfiltered $k$-step stimulation perform much worse than full RSIQL.

Overall, the results suggest that intermediate trajectory structure can be useful for long-horizon offline GCRL even without explicit hierarchical control. A promising direction for future work is to combine reward stimulation with better progress estimators, uncertainty-aware subgoal selection, and representation learning methods for visually complex or poorly covered offline datasets.

\bibliography{references}
\bibliographystyle{tmlr}

\appendix
\section{Appendix}

\subsection{Horizon-dependent scaling of the goal-reaching backward message}
\label{app_a}

In this section, we prove Proposition~\ref{prop1_revised}. We make the dependence on the goal $g$ explicit throughout the proof. Recall that the $0/-1$ goal-reaching reward convention used in the analysis is
\[
r(s_t,a_t,g)=
\begin{cases}
0, & \text{if the goal is reached},\\
-1, & \text{otherwise}.
\end{cases}
\]
Following the probabilistic-inference formulation, the optimality likelihood is
\[
p(\cO_t=1\mid s_t,a_t,g)=\exp(r(s_t,a_t,g)).
\]
Therefore, before the goal is reached, each transition contributes a multiplicative factor $e^{-1}$.

We analyze the goal-reaching component of the backward message:
\begin{align}
\cM^g(s_t,a_t,g)
=
p\!\left(s_T=g,\prod_{i=t}^{T}\cO_i=1\mid s_t,a_t,g\right),
\label{eq:app_goal_message_sa}
\end{align}
and its state-level counterpart
\begin{align}
\cM^g(s_t,g)
=
\int_{\cA}
\cM^g(s_t,a_t,g)\pi_\beta(a_t\mid s_t,g)\,da_t.
\label{eq:app_goal_message_s}
\end{align}
This notation isolates the contribution of trajectories that terminate by reaching the target goal at time $T$. For this analysis, successful trajectories are stopped when the goal is first reached. Hence, conditional on success at terminal time $T$, the transitions from $t$ through $T-1$ are non-goal transitions, while the terminal state satisfies $s_T=g$. The derivation does not require deterministic transition dynamics; stochasticity is absorbed into the goal-reaching probability under the behavior policy and environment dynamics.

We first derive a recursive expression. By the Markov property,
\begin{align}
\cM^g(s_t,a_t,g)
&=
p(\cO_t=1\mid s_t,a_t,g)
\int_{\cS}
\cM^g(s_{t+1},g)
p(s_{t+1}\mid s_t,a_t,g)\,ds_{t+1}.
\label{eq:app_rec_sa}
\end{align}
Similarly,
\begin{align}
\cM^g(s_t,g)
=
\int_{\cA}
\cM^g(s_t,a_t,g)
\pi_\beta(a_t\mid s_t,g)\,da_t.
\label{eq:app_rec_s}
\end{align}

\begin{proof}
At the terminal step, if $s_T=g$, then $r(s_T,a_T,g)=0$ and hence
\begin{align}
p(\cO_T=1\mid s_T,a_T,g)=\exp(0)=1.
\end{align}
Therefore,
\begin{align}
\cM^g(s_T,a_T,g)=1,
\qquad
\cM^g(s_T,g)=1.
\end{align}

We now prove the result by backward induction.

\paragraph{Base case.}
For $t=T-1$, since the transition at $T-1$ is non-terminal before reaching the goal, we have
\[
p(\cO_{T-1}=1\mid s_{T-1},a_{T-1},g)=e^{-1}.
\]
Using \eqnref{eq:app_rec_sa},
\begin{align}
\cM^g(s_{T-1},a_{T-1},g)
&=
e^{-1}
\int_{\cS}
\cM^g(s_T,g)
p(s_T\mid s_{T-1},a_{T-1},g)\,ds_T \nonumber\\
&=
e^{-1}
\PP_{\pi_\beta}(s_T=g\mid s_{T-1},a_{T-1},g).
\end{align}
Similarly,
\begin{align}
\cM^g(s_{T-1},g)
&=
\int_{\cA}
\cM^g(s_{T-1},a_{T-1},g)
\pi_\beta(a_{T-1}\mid s_{T-1},g)\,da_{T-1} \nonumber\\
&=
e^{-1}
\PP_{\pi_\beta}(s_T=g\mid s_{T-1},g).
\end{align}

\paragraph{Induction hypothesis.}
Assume that for some $t+1<T$,
\begin{align}
\cM^g(s_{t+1},a_{t+1},g)
=
\PP_{\pi_\beta}(s_T=g\mid s_{t+1},a_{t+1},g)e^{-(T-t-1)},
\end{align}
and
\begin{align}
\cM^g(s_{t+1},g)
=
\PP_{\pi_\beta}(s_T=g\mid s_{t+1},g)e^{-(T-t-1)}.
\end{align}

\paragraph{Induction step.}
Using the recursion in \eqnref{eq:app_rec_sa},
\begin{align}
\cM^g(s_t,a_t,g)
&=
e^{-1}
\int_{\cS}
\cM^g(s_{t+1},g)
p(s_{t+1}\mid s_t,a_t,g)\,ds_{t+1} \nonumber\\
&=
e^{-1}
\int_{\cS}
\PP_{\pi_\beta}(s_T=g\mid s_{t+1},g)
e^{-(T-t-1)}
p(s_{t+1}\mid s_t,a_t,g)\,ds_{t+1} \nonumber\\
&=
e^{-(T-t)}
\PP_{\pi_\beta}(s_T=g\mid s_t,a_t,g).
\end{align}
The final equality follows from the law of total probability.

For the state-level message,
\begin{align}
\cM^g(s_t,g)
&=
\int_{\cA}
\cM^g(s_t,a_t,g)\pi_\beta(a_t\mid s_t,g)\,da_t \nonumber\\
&=
e^{-(T-t)}
\int_{\cA}
\PP_{\pi_\beta}(s_T=g\mid s_t,a_t,g)
\pi_\beta(a_t\mid s_t,g)\,da_t \nonumber\\
&=
e^{-(T-t)}
\PP_{\pi_\beta}(s_T=g\mid s_t,g).
\end{align}
Thus, by backward induction,
\begin{align}
\cM^g(s_t,a_t,g)
=
\PP_{\pi_\beta}(s_T=g\mid s_t,a_t,g)e^{-(T-t)},
\end{align}
and
\begin{align}
\cM^g(s_t,g)
=
\PP_{\pi_\beta}(s_T=g\mid s_t,g)e^{-(T-t)}.
\end{align}
This proves Proposition~\ref{prop1_revised}.
\end{proof}

\begin{remark}
The proof concerns the \emph{absolute} goal-reaching component of the backward message. The factor $e^{-(T-t)}$ is shared by actions evaluated at the same time step, so this factorization alone does not imply that relative action preferences, posterior action ratios, or learned IQL advantages decay with the horizon. Those quantities also depend on the action-dependent reachability term and on value-estimation error.
\end{remark}

\begin{remark}
The exact factor $e^{-(T-t)}$ is induced by the $0/-1$ reward scale in the control-as-inference transformation. A different non-goal reward scale changes this factor. Proposition~\ref{prop1_revised} should therefore be read as a representation-level statement about horizon-dependent scaling of the absolute message, rather than as a general failure theorem for long-horizon offline GCRL.
\end{remark}

\subsection{Optimality-conditioned policy extraction}
\label{app_optpolicy}

In this section, we prove Proposition~\ref{prop2_revised}. The proof follows the probabilistic-inference view of reinforcement learning, but we make the goal conditioning explicit.

\begin{proof}
For a state $s_t$ and goal $g$ with $\cM(s_t,g)>0$, consider the posterior action distribution conditioned on future optimality:
\begin{align}
p\!\left(a_t\mid s_t,g,\prod_{i=t}^{T}\cO_i=1\right).
\end{align}
Using Bayes' rule,
\begin{align}
p\!\left(a_t\mid s_t,g,\prod_{i=t}^{T}\cO_i=1\right)
&=
\frac{
p\!\left(\prod_{i=t}^{T}\cO_i=1\mid s_t,a_t,g\right)
p(a_t\mid s_t,g)
}{
p\!\left(\prod_{i=t}^{T}\cO_i=1\mid s_t,g\right)
}.
\end{align}
In the offline setting, the action prior is the behavior policy, so
\[
p(a_t\mid s_t,g)=\pi_\beta(a_t\mid s_t,g).
\]
By the definitions of the backward messages,
\begin{align}
p\!\left(a_t\mid s_t,g,\prod_{i=t}^{T}\cO_i=1\right)
=
\frac{\cM(s_t,a_t,g)}{\cM(s_t,g)}
\pi_\beta(a_t\mid s_t,g).
\end{align}

Using the logarithmic message transformation
\[
Q^{\mathrm{msg}}(s_t,a_t,g)=\log \cM(s_t,a_t,g),
\qquad
V^{\mathrm{msg}}(s_t,g)=\log \cM(s_t,g),
\]
we obtain
\begin{align}
p\!\left(a_t\mid s_t,g,\prod_{i=t}^{T}\cO_i=1\right)
&=
\exp\!\left(Q^{\mathrm{msg}}(s_t,a_t,g)-V^{\mathrm{msg}}(s_t,g)\right)
\pi_\beta(a_t\mid s_t,g).
\label{eq:app_posterior_policy}
\end{align}
Because
\[
\cM(s_t,g)
=
\int_{\cA}\cM(s_t,a_t,g)\pi_\beta(a_t\mid s_t,g)\,da_t,
\]
the posterior distribution in \eqnref{eq:app_posterior_policy} is normalized over actions in the support of the behavior policy.

Let $\pi(a\mid s,g)$ be a parametric policy. To fit $\pi$ to the optimality-conditioned posterior, we minimize
\begin{align}
\EE_{s,g}
\left[
D_{\mathrm{KL}}
\left(
p(\cdot\mid s,g,\prod_{i=t}^{T}\cO_i=1)
\,\|\, 
\pi(\cdot\mid s,g)
\right)
\right].
\end{align}
Dropping terms independent of $\pi$, this is equivalent to maximizing
\begin{align}
\EE_{(s,a,g)\sim\cD}
\left[
\exp\!\left(Q^{\mathrm{msg}}(s,a,g)-V^{\mathrm{msg}}(s,g)\right)
\log \pi(a\mid s,g)
\right].
\end{align}
This gives the objective in Proposition~\ref{prop2_revised}.
\end{proof}

\begin{remark}
The policy in Proposition~\ref{prop2_revised} should be interpreted as an optimality-conditioned posterior policy over the support of the offline data, rather than as an unrestricted online optimal policy. This distinction is important in offline RL because actions outside the dataset support may lead to extrapolation error.
\end{remark}

\subsection{Delayed-goal value separation and local signal quality}
\label{app_b}

We first prove Lemma~\ref{lem:delayed_goal_separation}, which derives the temporal scaling of value separation directly from the delayed-goal reward structure. We then prove Proposition~\ref{prop3_revised} under the local Gumbel error model.

\begin{proof}[Proof of Lemma~\ref{lem:delayed_goal_separation}]
Under the $0/-1$ goal-reaching reward convention and the delayed-goal condition, both continuation policies receive reward $-1$ at every time $j=t,\ldots,T-1$. Therefore, the pre-terminal parts of their expected returns are identical and cancel in the difference. At time $T$, the expected reward under a continuation policy $\pi$ is
\begin{align}
\EE_\pi[r_T\mid s_t,a_t,g]
&=
0\cdot \PP_\pi(s_T=g\mid s_t,a_t,g)
-1\cdot \PP_\pi(s_T\neq g\mid s_t,a_t,g)\\
&=
-1+\PP_\pi(s_T=g\mid s_t,a_t,g).
\end{align}
Hence,
\begin{align}
Q_t^{\pi^s}(s_t,a_t,g)-Q_t^{\pi^f}(s_t,a_t,g)
&=
\gamma^{T-t}
\left[
(-1+p_s)-(-1+p_f)
\right]\\
&=
\gamma^{T-t}(p_s-p_f).
\end{align}
Taking absolute values gives
\begin{align}
\cR_t(s_t,a_t,g)
=
\gamma^{T-t}|p_s-p_f|
\le
\gamma^{T-t},
\end{align}
since $p_s,p_f\in[0,1]$. If $p_s=1$ and $p_f=0$, equality holds. This proves Lemma~\ref{lem:delayed_goal_separation}.
\end{proof}

\begin{remark}
The delayed-goal condition is essential for the geometric upper bound. If one continuation policy can reach the goal before time $T$, its reward stream differs earlier and the corresponding return gap can exceed $\gamma^{T-t}$. Thus, Lemma~\ref{lem:delayed_goal_separation} should be interpreted as a delayed-credit-assignment result rather than a universal bound for arbitrary pairs of policies.
\end{remark}

\begin{proof}[Proof of Proposition~\ref{prop3_revised}]
Let $Q_t$ denote the target action-value quantity being approximated locally and let $\hat Q_t$ be the learned Q-function. We decompose the local value-estimation error into approximation error and Bellman error:
\begin{align}
\epsilon_t(s,a,g)
&=
Q_t(s,a,g)-\hat Q_t(s,a,g),\\
\delta_t(s,a,g)
&=
\hat Q_t(s,a,g)-\cT_g\hat Q_t(s,a,g).
\end{align}
For the local probability calculation, we use the dominant local residual model
\begin{align}
\zeta_t(s,a,g)
=
\left|
\epsilon_t(s,a,g)+|\delta_t(s,a,g)|
\right|.
\label{eq:app_local_residual}
\end{align}
For a stylized local calculation, we adopt a Gumbel approximation motivated by extreme-value models used in offline RL~\citep{garg2023extreme} and assume
\[
\epsilon_t(s,a,g)\sim \cG(0,b),
\]
where $\cG(0,b)$ is a Gumbel distribution with location $0$ and scale $b>0$. Treating $\delta_t(s,a,g)$ as fixed in the local analysis, define
\[
\cZ_t
=
\epsilon_t(s,a,g)+|\delta_t(s,a,g)|.
\]
Then
\[
\cZ_t\sim \cG(|\delta_t(s,a,g)|,b).
\]

The signal-to-noise ratio is
\[
\Delta_t(s,a,g)
=
\frac{\cR_t(s,a,g)}{\zeta_t(s,a,g)}.
\]
Under the local residual model in \eqnref{eq:app_local_residual},
\[
\Delta_t(s,a,g)>1
\quad\Longleftrightarrow\quad
|\cZ_t|<\cR_t(s,a,g).
\]
Therefore,
\begin{align}
\PP(\Delta_t(s,a,g)>1)
&=
\PP(|\cZ_t|<\cR_t(s,a,g)) \nonumber\\
&=
F(\cR_t(s,a,g);|\delta_t(s,a,g)|,b)
-
F(-\cR_t(s,a,g);|\delta_t(s,a,g)|,b),
\end{align}
where the Gumbel cumulative distribution function is
\[
F(x;\mu,b)=\exp\left(-\exp\left(-\frac{x-\mu}{b}\right)\right).
\]

Let
\[
f(u)=\exp(-\exp(-u)).
\]
Its derivative is
\[
f'(u)=\exp(-u)\exp(-\exp(-u)),
\]
whose maximum is $1/e$. Thus, $f$ is globally Lipschitz with constant $1/e$. Hence,
\begin{align}
\PP(\Delta_t(s,a,g)>1)
&=
\left|
f\!\left(\frac{\cR_t(s,a,g)-|\delta_t(s,a,g)|}{b}\right)
-
f\!\left(\frac{-\cR_t(s,a,g)-|\delta_t(s,a,g)|}{b}\right)
\right| \nonumber\\
&\le
\frac{1}{e}
\left|
\frac{\cR_t(s,a,g)-|\delta_t(s,a,g)|}{b}
-
\frac{-\cR_t(s,a,g)-|\delta_t(s,a,g)|}{b}
\right| \nonumber\\
&=
\frac{2}{e b}\cR_t(s,a,g).
\end{align}
By Lemma~\ref{lem:delayed_goal_separation},
\begin{align}
\cR_t(s,a,g)\le \gamma^{T-t},
\end{align}
and therefore
\begin{align}
\PP(\Delta_t(s,a,g)>1)
\le
\frac{2}{e b}\gamma^{T-t}.
\end{align}
If $b\ge 2$, then
\begin{align}
\PP(\Delta_t(s,a,g)>1)
\le
\frac{\gamma^{T-t}}{e}.
\end{align}
This proves Proposition~\ref{prop3_revised}.
\end{proof}

\begin{remark}
Proposition~\ref{prop3_revised} is a local signal-quality result under the stated delayed-goal and Gumbel error conditions. Its horizon dependence is inherited from Lemma~\ref{lem:delayed_goal_separation}, not from Proposition~\ref{prop1_revised}. The latter concerns an absolute control-as-inference message and remains conceptually distinct from the return-separation argument used here.
\end{remark}

\begin{remark}
The role of Proposition~\ref{prop3_revised} is motivational: in delayed-goal problems, useful goal-directed value separation can become small at early states, allowing local estimation error to obscure it. RSIQL addresses this failure mode by introducing less-delayed supervision at selected progress-making transitions.
\end{remark}

\subsection{Effect of reward stimulation}
\label{app_reward_stimulation}

In this section, we prove Proposition~\ref{thm:rsiql_effect}. Recall that RSIQL constructs a stimulated reward
\begin{align}
\tilde r_j
=
r_j+ \eta_j(g;k,\delta)(r_{\max}-r_j),
\label{eq:app_stimulated_reward}
\end{align}
where $\eta_j(g;k,\delta)\in\{0,1\}$ is the progress indicator, and $r_{\max}=0$ under the $0/-1$ goal-reaching step-cost convention. The indicator $\eta_j(g;k,\delta)$ is equal to one when the $k$-step future state is estimated by the auxiliary value function to make sufficient progress toward the final goal. A selected non-terminal transition with $\tilde r_j=0$ is not made terminal: RSIQL continues to bootstrap from its successor state with the standard one-step target $\tilde r_j+\gamma V_\psi(s_{j+1},g)$. Hence stimulation removes the selected current-step penalty but leaves the discounted continuation value intact.

Let
\begin{align}
Q_t^\pi(s_t,a_t,g)
=
\mathbb{E}_{\pi}
\left[
\sum_{j=t}^{T}
\gamma^{j-t} r_j
\mid s_t,a_t,g
\right]
\end{align}
denote the action-value function under the original reward, and let
\begin{align}
\tilde Q_t^\pi(s_t,a_t,g)
=
\mathbb{E}_{\pi}
\left[
\sum_{j=t}^{T}
\gamma^{j-t} \tilde r_j
\mid s_t,a_t,g
\right]
\end{align}
denote the action-value function under the stimulated reward.

\begin{proof}
Subtracting the original return from the stimulated return gives
\begin{align}
\tilde Q_t^\pi(s_t,a_t,g)-Q_t^\pi(s_t,a_t,g)
&=
\mathbb{E}_{\pi}
\left[
\sum_{j=t}^{T}
\gamma^{j-t}
(\tilde r_j-r_j)
\mid s_t,a_t,g
\right].
\end{align}
Using the definition of $\tilde r_j$ in \eqnref{eq:app_stimulated_reward}, we obtain
\begin{align}
\tilde r_j-r_j
=
 \eta_j(g;k,\delta)(r_{\max}-r_j).
\end{align}
Therefore,
\begin{align}
\tilde Q_t^\pi(s_t,a_t,g)-Q_t^\pi(s_t,a_t,g)
&=
\mathbb{E}_{\pi}
\left[
\sum_{j=t}^{T}
\gamma^{j-t}
 \eta_j(g;k,\delta)(r_{\max}-r_j)
\mid s_t,a_t,g
\right].
\label{eq:app_return_gap}
\end{align}

Since $\eta_j(g;k,\delta)\in\{0,1\}$, $\gamma^{j-t}\ge 0$, and $r_{\max}\ge r_j$, every term inside the summation is nonnegative. Hence,
\begin{align}
\tilde Q_t^\pi(s_t,a_t,g)\ge Q_t^\pi(s_t,a_t,g).
\end{align}
Moreover, the inequality is strict whenever there exists at least one time step $j\ge t$ such that
\[
\mathbb{P}_{\pi}
\left(
\eta_j(g;k,\delta)=1
\;\text{and}\;
r_j<r_{\max}
\mid s_t,a_t,g
\right)>0.
\]
This condition means that, with nonzero probability, the trajectory encounters a selected progress-making transition that receives positive reward stimulation. This proves Proposition~\ref{thm:rsiql_effect}.
\end{proof}

We then provide the proof of Corollary~\ref{cor:shorter_horizon}.

\begin{proof}
Under the delayed-goal condition in the corollary, the original reward remains at the non-goal penalty through time $T-1$ and first receives the goal-completion improvement at time $T$. By \eqnref{eq:stimulated_reward_general}, when $\eta_{t_i}(g;k,\delta)=1$ and $r_{t_i}<r_{\max}$, reward stimulation adds $r_{\max}-r_{t_i}>0$ at time $t_i$. Its discounted contribution to the return from time $t$ is
\[
\gamma^{t_i-t}(r_{\max}-r_{t_i}).
\]
Under the $0/-1$ goal-reaching convention, this selected non-goal transition has $r_{t_i}=-1$ and $r_{\max}=0$, so the added contribution has magnitude $\gamma^{t_i-t}$. The original goal-completion improvement over the non-goal step penalty appears at time $T$ with discounted magnitude $\gamma^{T-t}$. Because $t_i<T$ and $\gamma\in(0,1)$,
\[
\gamma^{t_i-t}>\gamma^{T-t}.
\]
Hence the stimulated reward supplies a positive training contribution at a smaller temporal delay. The Bellman update itself remains one-step; only the location of the reward information is changed.
\end{proof}

\begin{remark}
Proposition~\ref{thm:rsiql_effect} is a statement about the training objective induced by the stimulated reward. It does not imply that the optimal value under the original task reward is increased. Rather, it shows that reward stimulation gives larger training returns to trajectories containing selected progress-making transitions. This stronger training signal is then used by IQL to form value targets and advantage weights for policy extraction.
\end{remark}

\begin{remark}
RSIQL should not be interpreted as potential-based reward shaping. Potential-based shaping is designed to preserve optimal policies under specific reward transformations. RSIQL instead uses selective training-time reward stimulation to improve credit assignment in offline sparse-reward goal-conditioned tasks. The final policy is still evaluated using the original goal-reaching metric.
\end{remark}

\subsection{Experiment Details}\label{app_experiment_details}

The learning procedure of RSIQL is summarized in Algorithm~\ref{alg:RSIQL}. Our offline GCRL framework is built on the goal-conditioned IQL algorithm described in Section~\ref{seciql}. We implement RSIQL on an NVIDIA 4090 GPU. Training is computationally efficient, since the method avoids both expensive graph construction in state-based settings and explicit subgoal prediction.

We use one fixed set of core IQL and RSIQL hyperparameters in the main experiments: expectile $\tau=0.7$, inverse temperature $\beta=3$, subgoal interval $k=25$, and progress threshold $\delta=0.6$. For visual goal-conditioned tasks, we additionally use the shared IMPALA state-goal encoder described in Section~\ref{sec6_setup}. The full set of optimization, architecture, and evaluation hyperparameters is reported in Table~\ref{tab4}. The alternative $k$ and $\delta$ values in Section~\ref{sec6_ablation} are used only for post-hoc sensitivity analysis and are not used to select the reported main results.

\paragraph{Auxiliary-value pretraining.}
For each task, the auxiliary progress value function $\bar V_\omega$ is pretrained from the same offline dataset $\mathcal D$ using the GC-IVL objective before RSIQL training begins. It is then kept fixed throughout RSIQL training and is never updated using stimulated rewards. This is a one-time task-level pretraining stage, and the resulting auxiliary value model can be reused across repeated RSIQL runs on the same task. Its role is only to rank candidate future states for the progress criterion; the deployed policy does not query $\bar V_\omega$ at test time.

\begin{table*}[t]
\caption{The hyperparameters for offline RL training.}
\label{tab4}
\begin{center}
\begin{small}
\begin{tabular}{lll}
\toprule
 & Hyperparameter & Value\\
\midrule
      &Actor hidden dim & 512  \\
Architecture     &Actor layers & 3 \\
   &Critic hidden dim & 512 \\
   &Critic layers & 3 \\
\midrule  
     &Batch size & 1024 for state-based tasks,256 for pixel-based tasks\\
     &Optimizer & Adam  \\
     &Actor learning rate  & 3e-4  \\
     &Critic learning rate  & 3e-4   \\
     &value learning rate  & 3e-4   \\
     &Discount factor  & 0.99    \\
IQL basic  &Target update rate for Q target networks & 0.005\\
      &Critic activation & gelu\\
      &Evaluation episode length& 100 for D4RL, 50 for OGBench \\
      &Number of iterations  & 1000000 for state-based tasks, 500000 for pixel-based tasks\\
      &expectile $\tau$  & 0.7\\
      &inverse temperature for AWR policy loss $\beta$  & 3\\
\midrule
        &subgoal interval $k$ & 25 \\
    RSIQL specific      &subgoal effectiveness ratio $\delta$ & 0.6 \\
\bottomrule
\end{tabular}
\end{small}
\end{center}
\end{table*}

We also provide representative trajectory visualizations generated by the learned policy on D4RL AntMaze tasks for different subgoal intervals in Figure~\ref{fig:traj_appendix}.\footnote{The star denotes the end of a trajectory.}

\begin{figure*}[t]
\centering
\begin{tabular}{cc}
\includegraphics[width=0.45\textwidth]{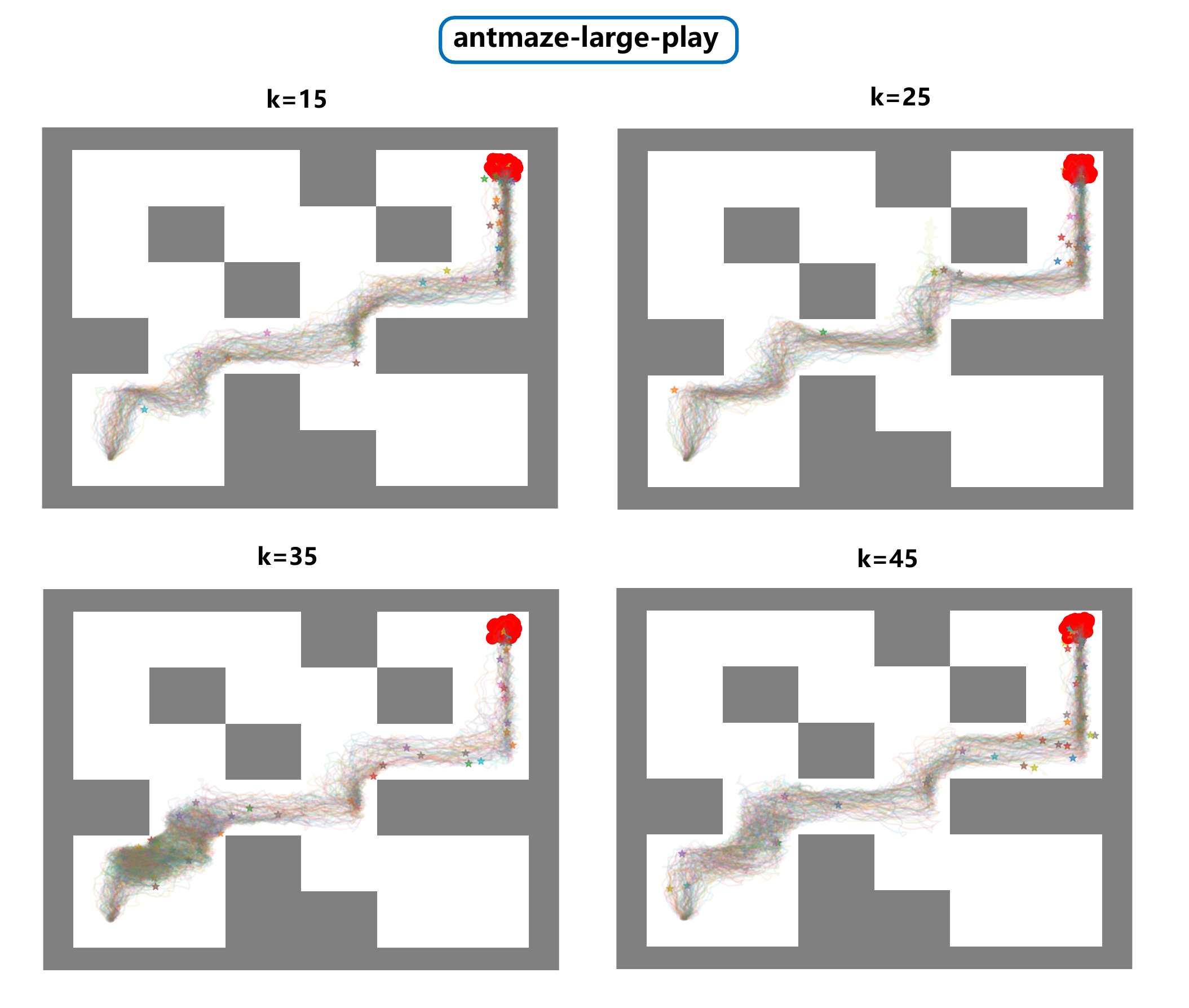} &
\includegraphics[width=0.45\textwidth]{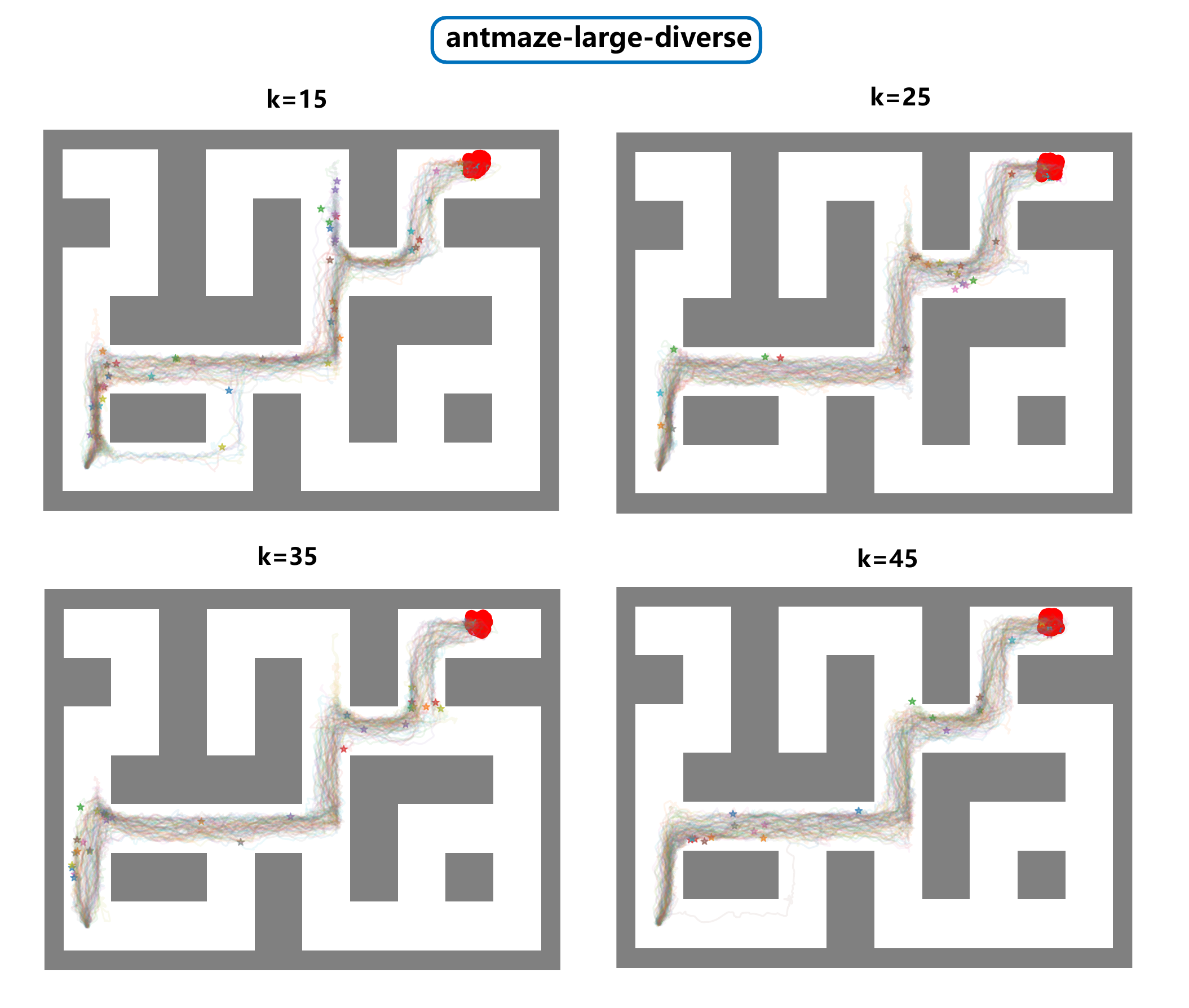} \\
\includegraphics[width=0.45\textwidth]{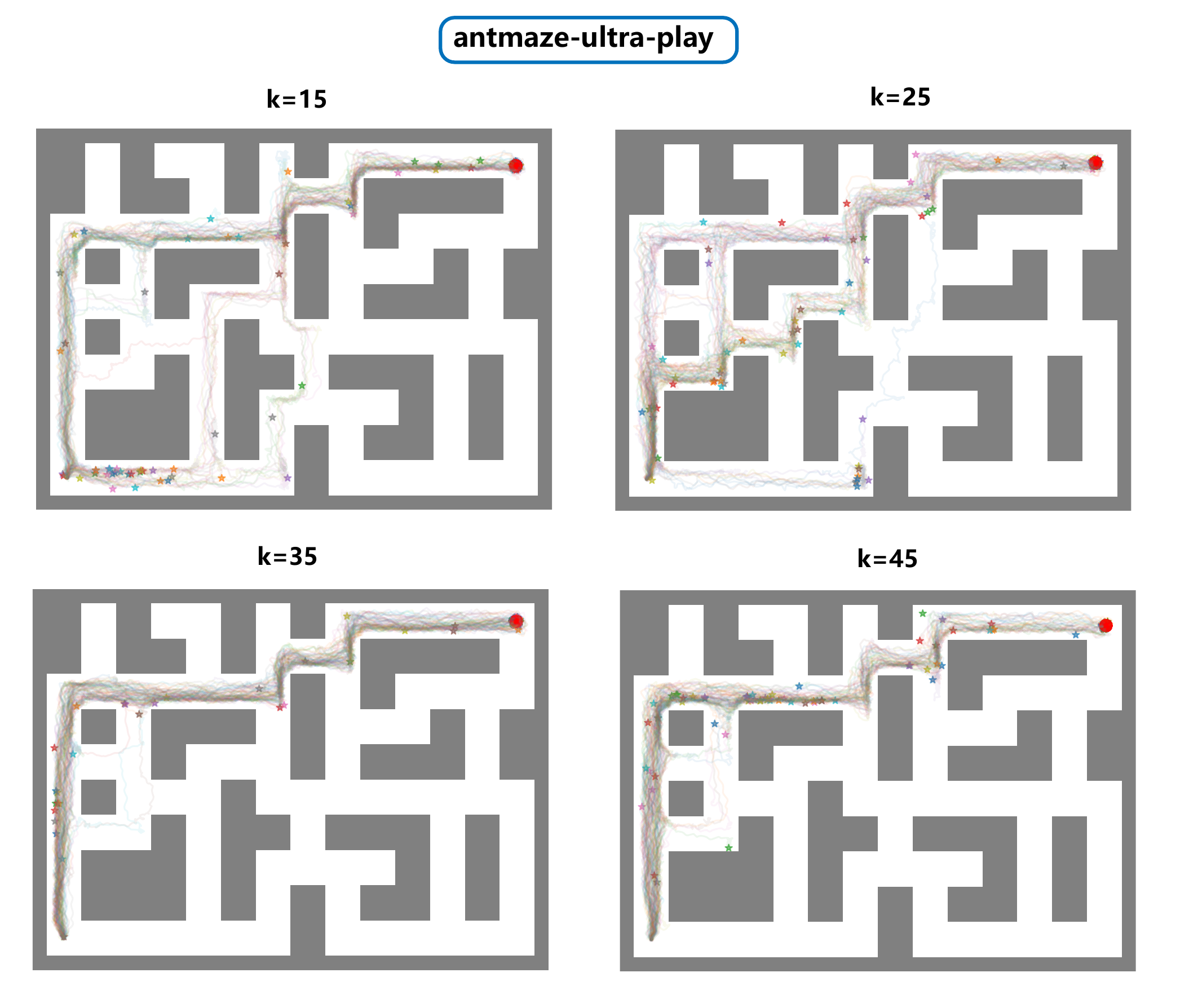} &
\includegraphics[width=0.45\textwidth]{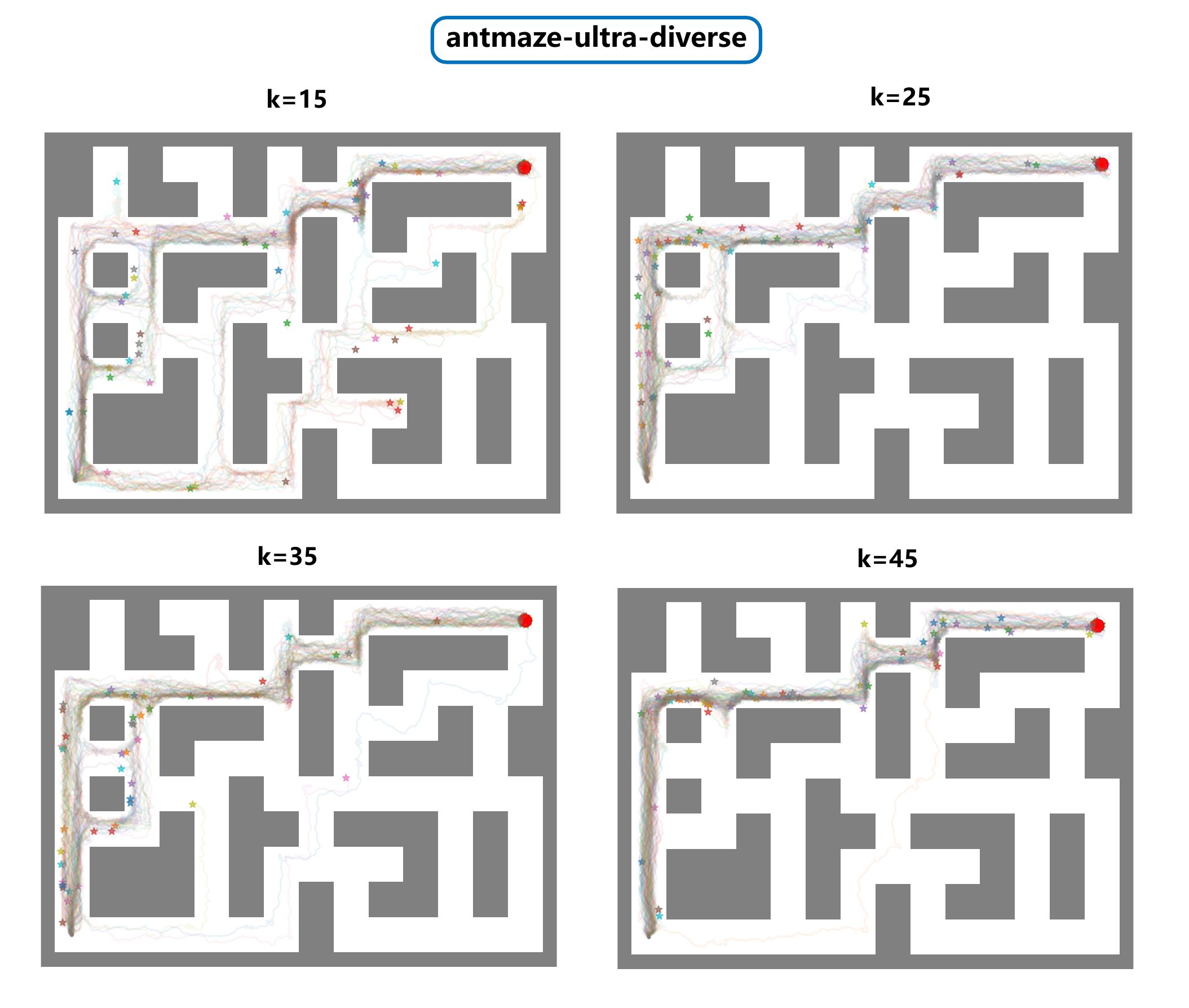}
\end{tabular}
\caption{Representative AntMaze trajectory visualizations for the subgoal-interval sensitivity study. The four panels collect the previously separate trajectory figures into a single appendix figure; task and interval labels are shown inside the original panels. The evaluation interval is 250000 with episode length 100.}
\label{fig:traj_appendix}
\end{figure*}

\end{document}